\theoremstyle{plain}
\newtheorem{theorem}{Theorem}[section]
\newtheorem{lemma}[theorem]{Lemma}
\theoremstyle{definition}
\newtheorem{definition}[theorem]{Definition}
\newtheorem{assumption}[theorem]{Assumption}
\theoremstyle{remark}
\Crefname{assumption}{Assumption}{Assumptions}
\crefname{assumption}{Assumption}{Assumptions}
\renewcommand{\paragraph}{%
  \@startsection{paragraph}{4}%
  {\z@}{0ex \@plus 0ex \@minus 0ex}{-1em}%
  {\normalfont\normalsize\bfseries}%
}
\newlist{lemenum}{enumerate}{1} %
\setlist[lemenum]{label=(\roman*),ref=\thelemma\,(\roman*),topsep=0pt}
\Crefname{lemenumi}{Lemma}{Lemmas}
\newlist{corenum}{enumerate}{1} %
\setlist[corenum]{label=(\roman*),ref=\thecorollary\,(\roman*),topsep=0pt}
\Crefname{corenumi}{Corollary}{Corollaries}
\newlist{thmenum}{enumerate}{1} %
\setlist[thmenum]{label=(\roman*),ref=\thetheorem\,(\roman*),topsep=0pt}
\Crefname{thmenumi}{Theorem}{Theorems}
\newlist{propenum}{enumerate}{1} %
\setlist[propenum]{label=(\roman*),ref=\thedefinition\,(\roman*),topsep=0pt}
\Crefname{propenumi}{Property}{Properties}
\newlist{assenum}{enumerate}{1} %
\setlist[assenum]{label=(\roman*),ref=\theassumption\,(\roman*),topsep=0pt}
\Crefname{assenumi}{Assumption}{Assumptions}
\pgfplotsset{compat=newest}
\definecolor{chaptercolor}{HTML}{1A254B}
\definecolor{darkblue}{HTML}{1A254B}
\definecolor{linkcolor}{HTML}{2B50AA}
\definecolor{citecolor}{HTML}{2B50AA}
\definecolor{linkcolor}{HTML}{2B50AA}
\definecolor{lightlinkcolor}{HTML}{9A8F97}
\definecolor{darklinkcolor}{HTML}{1A254B}
\definecolor{pink}{HTML}{E05F60}
\definecolor{lightblue}{HTML}{A7BED3}
\definecolor{red}{HTML}{F2545B}
\definecolor{blue}{HTML}{2b50aa}
\definecolor{m1}{HTML}{1f77b4}
\definecolor{m2}{HTML}{ff7f0e}
\definecolor{m3}{HTML}{2ca02c}
\definecolor{m4}{HTML}{d62728}
\definecolor{m5}{HTML}{9467bd}
\definecolor{m6}{HTML}{8c564b}
\definecolor{m7}{HTML}{e377c2}
\definecolor{m8}{HTML}{7f7f7f}
\definecolor{m9}{HTML}{bcbd22}
\definecolor{m10}{HTML}{17becf}
\NewDocumentCommand{\incfig}{mo}{
  \begin{center}
    \IfValueT{#2}{\def\svgwidth{#2}}{\def\svgwidth{\columnwidth}}
    \import{./figures/}{#1.pdf_tex}
  \end{center}
}
\NewDocumentCommand{\incplt}{O{\columnwidth}m}{%
  \begin{center}
    \adjustbox{width=#1}{\import{./plots/output/}{#2.pgf}}
  \end{center}
}
\newenvironment{chapquote}[2][2em]
  {%
   \def\chapquote@author{#2}%
   \itshape}
  {\par\normalfont\hfill--\ \chapquote@author\par}
\newcommand{\rmnum}[1]{\romannumeral #1}
\newcommand{\egeq}[1]{\overset{(\rmnum{#1})}&{\geq}}
\newcommand{\eleq}[1]{\overset{(\rmnum{#1})}&{\leq}}
\newcommand{\eeq}[1]{\overset{(\rmnum{#1})}&{=}}
\newcommand{\e}[1]{\ensuremath{(\rmnum{#1})}}
\newcommand{\ebf}[1]{\ensuremath{\textbf{(\rmnum{#1})}}}
\newcommand{\itl}{\textcolor{red}{\textbf{\texttt{ITL}}}\xspace}
\newcommand{\gitl}{\textcolor{red}{\textbf{\texttt{G-ITL}}}\xspace}
\newcommand{\litl}{\textcolor{red}{\textbf{\texttt{L-ITL}}}\xspace}
\newcommand{\ctl}{\textbf{\texttt{CTL}}\xspace}
\newcommand{\gctl}{\textbf{\texttt{G-CTL}}\xspace}
\newcommand{\lctl}{\textbf{\texttt{L-CTL}}\xspace}
\DeclareFontFamily{U}{mathb}{\hyphenchar\font45}
\DeclareFontShape{U}{mathb}{m}{n}{
      <5> <6> <7> <8> <9> <10> gen * mathb
      <10.95> mathb10 <12> <14.4> <17.28> <20.74> <24.88> mathb12
      }{}
\DeclareSymbolFont{mathb}{U}{mathb}{m}{n}
\DeclareMathSymbol{\Asterisk}      {2}{mathb}{"06}
\newcommand*{\abs}[1]{| #1 |}
\NewDocumentCommand{\norm}{sm}{\IfBooleanTF{#1}{\|#2\|}{\left\| #2 \right\|}}
\newcommand*{\const}{\mathrm{const}}
\DeclareMathOperator*{\defeq}{\smash{\overset{\mathrm{def}}{=}}}
\DeclareMathOperator*{\eqdef}{\smash{\overset{\mathrm{def}}{=}}}
\DeclareMathOperator*{\argmax}{arg\,max}
\DeclareMathOperator*{\argmin}{arg\,min}
\DeclarePairedDelimiter\parentheses{(}{)}
\DeclarePairedDelimiter\brackets{[}{]}
\DeclarePairedDelimiter\braces{\{}{\}}
\newcommand{\R}{\mathbb{R}}
\renewcommand{\vec}[1]{\boldsymbol{#1}}
\newcommand{\mat}[1]{\boldsymbol{#1}}
\newcommand{\spa}[1]{\mathcal{#1}}
\newcommand{\opt}[1]{#1^\star}
\DeclareMathOperator*{\iid}{\smash{\overset{\mathrm{iid}}{\sim}}}
\DeclareMathOperator*{\uar}{\smash{\overset{\mathrm{u.a.r.}}{\sim}}}
\NewDocumentCommand{\Ind}{m}{\mathbbm{1}\{{#1}\}}
\NewDocumentCommand{\fnPr}{}{\mathbb{P}}
\RenewDocumentCommand{\Pr}{om}{\fnPr\IfValueT{#1}{_{#1}}\parentheses*{#2}}
\RenewDocumentCommand{\H}{mo}{\mathrm{H}\IfValueTF{#2}{\!\left[#1\ \middle|\ #2\right]}{\brackets*{#1}}}
\NewDocumentCommand{\Hsm}{mo}{\mathrm{H}\IfValueTF{#2}{[#1 \mid #2]}{\brackets{#1}}}
\NewDocumentCommand{\I}{mmo}{\mathrm{I}\IfValueTF{#3}{\!\left(#1;#2\ \middle|\ #3\right)}{\parentheses*{#1; #2}}}
\NewDocumentCommand{\Ism}{mmo}{\mathrm{I}\IfValueTF{#3}{(#1;#2 \mid #3)}{\parentheses{#1; #2}}}
\NewDocumentCommand{\E}{somo}{\ensuremath{\mathbb{E}\IfValueT{#2}{_{#2}}{} \IfBooleanTF{#1}{#3}{\IfValueTF{#4}{\!\left[#3\ \middle|\ #4\right]}{\brackets*{#3}}}}}
\NewDocumentCommand{\Esm}{somo}{\ensuremath{\mathbb{E}\IfValueT{#2}{_{#2}}{} \IfBooleanTF{#1}{#3}{\IfValueTF{#4}{\!\left[#3\ \middle|\ #4\right]}{\brackets{#3}}}}}
\NewDocumentCommand{\Var}{somo}{\mathrm{Var}\IfValueT{#2}{_{#2}}{} \IfBooleanTF{#1}{#3}{\IfValueTF{#4}{\!\left[#3\ \middle|\ #4\right]}{\brackets*{#3}}}}
\NewDocumentCommand{\Varsm}{somo}{\mathrm{Var}\IfValueT{#2}{_{#2}}{} \IfBooleanTF{#1}{#3}{\IfValueTF{#4}{\left[#3\ \middle|\ #4\right]}{\brackets{#3}}}}
\NewDocumentCommand{\Cov}{som}{\mathrm{Cov}\IfValueT{#2}{_{#2}}{} \IfBooleanTF{#1}{#3}{\brackets*{#3}}}
\NewDocumentCommand{\Cor}{som}{\mathrm{Cor}\IfValueT{#2}{_{#2}}{} \IfBooleanTF{#1}{#3}{\brackets*{#3}}}
\NewDocumentCommand{\grad}{e_}{\boldsymbol{\nabla}\IfValueT{#1}{_{\!\!#1}\,}}
\NewDocumentCommand{\BigO}{m}{O\parentheses*{#1}}
\NewDocumentCommand{\BigOTil}{m}{\widetilde{O}\parentheses*{#1}}
\NewDocumentCommand{\transpose}{m}{#1^\top}
\NewDocumentCommand{\inv}{m}{#1^{-1}}
\RenewDocumentCommand{\det}{m}{\left| #1 \right|}
\NewDocumentCommand{\tr}{m}{\mathrm{tr}\;#1}
\NewDocumentCommand{\diag}{som}{\mathrm{diag}\IfValueT{#2}{_{#2}}{} \IfBooleanTF{#1}{\braces{#3}}{\braces*{#3}}}
\NewDocumentCommand{\msqrt}{m}{#1^{\nicefrac{1}{2}}}
\NewDocumentCommand{\N}{somm}{\mathcal{N}\IfBooleanTF{#1}{\left(}{(}\IfValueT{#2}{#2;}{} #3, #4\IfBooleanTF{#1}{\right)}{)}}
\NewDocumentCommand{\GP}{omm}{\mathcal{GP}(\IfValueT{#1}{#1;}{} #2, #3)}
\newcommand{\vzero}{\vec{0}}
\newcommand{\vf}{\vec{f}}
\newcommand{\vfsub}[1]{\vec{f}_{\!\!#1}}
\newcommand{\vk}{\vec{k}}
\newcommand{\vx}{\vec{x}}
\newcommand{\vxp}{\vec{x'}}
\newcommand{\vy}{\vec{y}}
\newcommand{\vysub}[1]{\vec{y}_{\!#1}}
\newcommand{\vz}{\vec{z}}
\newcommand{\vbeta}{\boldsymbol{\beta}}
\newcommand{\vmu}{\boldsymbol{\mu}}
\newcommand{\vmusub}[1]{\boldsymbol{\mu}_{\!#1}}
\newcommand{\vphi}{\boldsymbol{\phi}}
\newcommand{\vtheta}{\boldsymbol{\theta}}
\newcommand{\vthetap}{\boldsymbol{\theta'}}
\newcommand{\vthetahat}{\boldsymbol{\widehat{\theta}}}
\newcommand{\mzero}{\mat{0}}
\newcommand{\mA}{\mat{A}}
\newcommand{\mD}{\mat{D}}
\newcommand{\mI}{\mat{I}}
\newcommand{\mK}{\mat{K}}
\newcommand{\mKsub}[1]{\mat{K}_{\!#1}}
\newcommand{\mPsub}[1]{\mat{P}_{\!#1}}
\newcommand{\mPhi}{\mat{\Phi}}
\newcommand{\mQ}{\mat{Q}}
\newcommand{\mSigma}{\mat{\Sigma}}
\newcommand{\spA}{\spa{A}}
\newcommand{\spB}{\spa{B}}
\newcommand{\spD}{\spa{D}}
\newcommand{\spH}{\spa{H}}
\newcommand{\spS}{\spa{S}}
\newcommand{\spX}{\spa{X}}
\newcommand{\spY}{\spa{Y}}
\newcommand{\spPA}{\ensuremath{\spa{P}_{\!\!\spA}}}
\newcommand{\spPS}{\ensuremath{\spa{P}_{\!\spS}}}
\title{Active Few-Shot Fine-Tuning}
\author{Jonas Hübotter\thanks{Correspondence to \texttt{jonas.huebotter@inf.ethz.ch}}\ , Bhavya Sukhija, Lenart Treven, Yarden As, Andreas Krause \\
ETH Zurich, Switzerland
}
\begin{document}

\maketitle

\begin{abstract}

   We study the question: \emph{How can we select the right data for fine-tuning to a specific task?}
   We call this data selection problem \emph{active fine-tuning} and show that it is an instance of transductive active learning, a novel generalization of classical active learning.
   We propose \itl, short for \emph{\underline{i}nformation-based \underline{t}ransductive \underline{l}earning}, an approach which samples adaptively to maximize information gained about the specified task.
   We are the first to show, under general regularity assumptions, that such decision rules converge uniformly to the smallest possible uncertainty obtainable from the accessible data.
   We apply \itl to the few-shot fine-tuning of large neural networks and show that fine-tuning with \itl learns the task with significantly \emph{fewer} examples than the state-of-the-art.
\end{abstract}

\section{Introduction}\label{sec:introduction}

Despite the remarkable successes of large neural networks (NNs) across various fields, such as image classification and natural language processing, their performance can deteriorate when faced with slight variations between the source and target domains~\citep{recht2019imagenet,hendrycks2019benchmarking,koh2021wilds,lee2022surgical}.
Additionally, training large NNs requires large amounts of (labeled) data, which is often expensive or even impossible to obtain, and furthermore, training on such large datasets requires prohibitive computational resources.\looseness -1

Fine-tuning a large pre-trained model on a (small) dataset from the target domain is a cost- and computation-effective approach to address the distribution shift between source and target domains.
While previous work has studied the effectiveness of various training procedures for fine-tuning~\citep{howard2018universal,kornblith2019better,shen2021partial,lee2022surgical,silva2023towards}, the problem of obtaining a good dataset for fine-tuning has received less attention.
Selecting such a \emph{small} dataset is challenging, as it requires selecting the most relevant and diverse data from a large dataset based only on a few reference examples from the target domain.\looseness -1

\begin{wrapfigure}{r}{0.4\textwidth}
  \vspace{-0.5cm}
  \centering
  \includesvg[width=0.4\columnwidth]{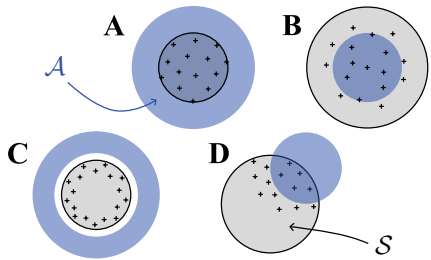}
  \vspace{-0.5cm}
  \caption{Instances of transductive active learning where the target~space~$\spA$ is shown in blue and the sample~space~$\spS$ is shown in gray. The points denote plausible observations within~$\spS$ to ``learn''~$\spA$. In \textbf{(A)}, the target space contains ``everything'' within~$\spS$ as well as points \emph{outside}~$\spS$. In \textbf{(B, C, D)}, one makes observations \emph{directed} towards learning about a particular target. Prior work on active learning has focused on the instances $\spA = \spS$ and $\spA \subset \spS$.\looseness=-1}
  \label{fig:transductive_active_learning}
  \vspace{-2.2cm}
\end{wrapfigure}

In this work, we propose a generalization of classical active learning, \emph{transductive active learning}.
We show that the fine-tuning of large neural networks can be seen as transductive active learning, and propose \itl which significantly improves upon the state-of-the-art --- \emph{enabling efficient few-shot fine-tuning of large neural networks}.\looseness -1

\paragraph{Transductive active learning}

We consider the problem of \emph{transductive active learning}, where provided an unknown and sufficiently regular function~$f$ over a domain~$\spX$ and given two arbitrary subsets of the domain; a \emph{target space} $\spA \subseteq \spX$, and a \emph{sample space} $\spS \subseteq \spX$, we study the question: \begin{center}
    \emph{How can we learn $f$ within $\spA$ \\ by actively sampling observations within $\spS$?}
\end{center}

Prior work on active learning commonly aims to learn $f$ globally, i.e., across the entire domain~$\spX$.
However, in many real-world problems, \ebf{1} the domain $\spX$ is so large that learning~$f$ globally is hopeless or \ebf{2} agents have limited information and cannot access the entire domain $\spX$ (e.g., due to restricted access or to act safely). %
Thus, global learning is often not desirable or even possible.
Instead, intelligent systems are typically required to act in a more \emph{directed} manner and \emph{extrapolate} beyond their limited information.
This work studies the above two aspects of active learning, which have remained largely unaddressed by prior work.
We provide a comprehensive overview of related work in \cref{sec:related_work}.\looseness-1

The fine-tuning of large neural networks is an instance of transductive active learning, where the target space~$\spA$ represents the test set over which we want to minimize risk, and the sample space~$\spS$ represents the dataset from which we can retrieve data points to fine-tune our model to~$\spA$.
\Cref{fig:transductive_active_learning} visualizes some instances of transductive active learning.
Whereas most prior work has focused on the instance ${\spX = \spA = \spS}$, \cite{mackay1992information} was the first to consider specific target spaces~$\spA$ as in (B), and transductive active learning generalizes to other instances such as (A), (C), and (D).\looseness=-1

\paragraph{Contributions}

\begin{itemize}%
    \item We are the first to give rates for the uniform convergence of uncertainty over the target space~$\spA$ to the smallest attainable value, given samples from the sample space~$\spS$~(\cref{thm:variance_convergence}), which implies a new generalization bound for functions in reproducing kernel Hilbert spaces~(\cref{thm:width_convergence}).

    \item We apply the transductive active learning framework to batch-wise \emph{active few-shot fine-tuning of large neural networks} and show empirically that \itl substantially outperforms the state-of-the-art~(\cref{sec:nns}).
\end{itemize}

\section{Preliminaries}\label{sec:problem setting}

We assume that the target space $\spA$ and sample space $\spS$ are finite.\footnote{Infinite domains can be addressed via discretization arguments from the Bayesian optimization literature.}
We model~$f$ as a stochastic process and denote the marginal random variables $f(\vx)$ by $f_{\vx}$, and joint random vectors $\{f(\vx)\}_{\vx \in X}$ for some ${X \subseteq \spX}, {\abs{X} < \infty}$ by $\vfsub{X}$.
Let $\vysub{X}$ denote the noisy observations of $\vfsub{X}$, ${\{y_{\vx} = f_{\vx} + \varepsilon_{\vx}\}_{\vx \in X}}$, where $\varepsilon_{\vx}$ is independent noise.\footnote{$X$ may be a multiset, in which case repeated $\vx$ correspond to repeated independent observations of~$y_{\vx}$.}
We study the ``adaptive'' setting, where in round $n$ the agent selects a point $\vx_n \in \spS$ and observes ${y_n = y_{\vx_n}}$.
The agent's choice of $\vx_n$ may depend on the prior observations $\spD_{n-1} \defeq \{(\vx_i, y_i)\}_{i<n}$.\looseness=-1

\paragraph{Background on information theory}

We briefly recap several important concepts from information theory, of which we provide formal definitions in \cref{sec:definitions}.
The (differential) entropy~$\H{\vf}$ is a measure of uncertainty about~$\vf$ and the conditional entropy $\H{\vf}[\vy]$ is the (expected) posterior uncertainty about~$\vf$ after observing~$\vy$.
The information gain~${\I{\vf}{\vy} = \H{\vf} - \H{\vf}[\vy]}$ measures the (expected) reduction in uncertainty about $\vf$ due to $\vy$.
The maximum information gain from $n$~noisy observations within $\spS$ is \begin{align*}
    \gamma_n \defeq \max_{\substack{X \subseteq \spS \\ \abs{X} \leq n}} \I{\vfsub{X}}{\vysub{X}}.
\end{align*}
This has been used previously \citep[e.g., by][]{srinivas2009gaussian,chowdhury2017kernelized,vakili2021information} as a measure of the ``information capacity'' of $f$.\looseness=-1

\section{Main Results on Transductive Active Learning}\label{sec:algorithm}

We propose \itl, which greedily maximizes the information gain at each round~$n$ between the prediction targets~$\vfsub{\spA}$ and the observation~$y_{\vx}$ conditioned on the prior observations~$\spD_{n-1}$.
Formally,
\begin{align*}
    \vx_{n} = \argmax_{\vx \in \spS} \Ism{\vfsub{\spA}}{y_{\vx}}[\spD_{n-1}]. \tag{\itl}
\end{align*}
This simple decision rule generalizes several widely used algorithms which we discuss in more detail in \cref{sec:related_work}.

We also consider an additional correlation-based decision rule, which will later uncover connections to existing approaches: \begin{align*}
    \vx_{n} &= \argmax_{\vx \in \spS} \sum_{\vxp \in \spA} \Cor{f_{\vx},f_{\vxp} \mid \spD_{n-1}}. \tag{\ctl}
\end{align*}
Unlike \ctl, \itl takes into account the mutual dependence between points in~$\spA$.\looseness=-1

In the following \cref{sec:itl:gp_setting,sec:algorithm:agnostic}, we discuss general settings in which the convergence properties of \itl can be analyzed theoretically.\looseness=-1

\subsection{Gaussian Process Setting}\label{sec:itl:gp_setting}

When $f \sim \GP{\mu}{k}$ is a Gaussian process (GP, \cite{williams2006gaussian}) with known mean function $\mu$ and kernel $k$, and the noise $\varepsilon_{\vx}$ is mutually independent and zero-mean Gaussian with known variance $\rho^2(\vx) > 0$, the \itl objective has a closed form expression: \begin{align}
    \Ism{\vfsub{\spA}}{y_{\vx}}[\spD_{n-1}] = \frac{1}{2} \log\parentheses*{\frac{\Var{y_{\vx} \mid \spD_{n-1}}}{\Var{y_{\vx} \mid \vfsub{\spA}, \spD_{n-1}}}}. \label{eq:objective_gp_setting}
\end{align}
Further, the information capacity $\gamma_n$ is sublinear in $n$ for a rich class of GPs, with rates summarized in \cref{table:gamma_rates} of the appendix \citep{srinivas2009gaussian,vakili2021information}.

\paragraph{Convergence to irreducible uncertainty}

So far, our discussion was centered around the role of the target space $\spA$ in facilitating \emph{directed} learning.
An orthogonal contribution of this work is to study \emph{extrapolation} from the sample space $\spS$ to points ${\vx \in \spA \setminus \spS}$.
To this end, we derive bounds on the marginal posterior variance $\sigma_n^2(\vx) \defeq \Var{f(\vx) \mid \spD_n}$ for points in $\spA$.
For ease of presentation, we assume in the following that $\spS \subseteq \spA$, however, the results are straightforward to generalize to other instances \citep{hubotter2024information}.
For \itl, these bounds imply uniform convergence of the variance for a rich class of GPs.
To the best of our knowledge, we are the first to derive such bounds, and they might be of independent interest for active learning.\looseness=-1

\looseness=-1 We define the \emph{irreducible uncertainty} as the variance of $f(\vx)$ provided complete knowledge of $f$ in $\spS$: \begin{equation*}
    \eta^2_{\spS}(\vx) \defeq \Varsm{f_{\vx} \mid \vfsub{\spS}}.
\end{equation*}
As the name suggests, $\eta^2_{\spS}(\vx)$ represents the smallest uncertainty one can hope to achieve from observing only within $\spS$.
For all $\vx \in \spS$, it is easy to see that $\eta^2_{\spS}(\vx) = 0$.
However, the irreducible uncertainty of $\vx \not\in \spS$ may be (and typically is!) strictly positive.\looseness=-1

\begin{theorem}[Generalization bound on marginal variance for \itl]\label{thm:variance_convergence}
   \looseness=-1 Assume that ${f \sim \GP{\mu}{k}}$ with known mean function $\mu$ and kernel $k$, the noise $\varepsilon_{\vx}$ is mutually independent and zero-mean Gaussian with known variance $\rho^2(\vx) > 0$, and $\gamma_n$ is sublinear in $n$.
   Then, for any $n \geq 1, \epsilon > 0$, and $\vx \in \spA$, there exists a constant $C$ such that \begin{align}
        \sigma_n^2(\vx) \leq \underbrace{\eta^2_{\spS}(\vx)}_{\text{irreducible}} + \underbrace{C \gamma_n / \sqrt{n}}_{\text{reducible}}. \label{eq:variance_convergence:extrapolation}
    \end{align}
\end{theorem}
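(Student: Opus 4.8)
The plan is to decompose the posterior variance into the irreducible part and a \emph{reducible} part, bound the reducible part by the worst-case posterior variance inside the sample space, and then drive the latter to zero using the information-gain budget $\gamma_n$. The key structural fact I would exploit is that under $\spS \subseteq \spA$ the \itl rule collapses to (noise-weighted) uncertainty sampling within $\spS$. Throughout, fix $\vx \in \spA$ and set $X_n \defeq \{\vx_1,\dots,\vx_n\}$.

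\emph{Step 1 (exact decomposition).} Since $f$ is a GP, every conditional variance is deterministic, so the law of total variance gives
\[
  \sigma_n^2(\vx) = \Var{f_{\vx}}[\vfsub{\spS},\spD_n] + \Var{\E{f_{\vx}}[\vfsub{\spS},\spD_n]}[\spD_n].
\]
Because each sample lies in $\spS$ and the noise is independent, $\spD_n$ is conditionally independent of $f_{\vx}$ given $\vfsub{\spS}$; hence the first term equals $\Varsm{f_{\vx} \mid \vfsub{\spS}} = \eta^2_{\spS}(\vx)$ and $\E{f_{\vx}}[\vfsub{\spS},\spD_n] = \E{f_{\vx}}[\vfsub{\spS}]$. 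This isolates the reducible term $R_n(\vx) \defeq \Var{\E{f_{\vx}}[\vfsub{\spS}]}[\spD_n]$, and it remains to show $R_n(\vx) = \BigO{\gamma_n/\sqrt{n}}$.

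\emph{Step 2 (reduce to in-sample variance).} For jointly Gaussian variables $\E{f_{\vx}}[\vfsub{\spS}]$ is an affine function $\const + \transpose{\vw}\vfsub{\spS}$ with weights $\vw = \inv{\mKsub{\spS\spS}}\vk_{\spS\vx}$ depending only on the prior kernel. Writing $\mSigma_n$ for the posterior covariance of $\vfsub{\spS}$ given $\spD_n$,
\[
  R_n(\vx) = \transpose{\vw}\mSigma_n\vw \;\leq\; \norm{\vw}_1^2 \,\max_{\vxp \in \spS}\sigma_n^2(\vxp),
\]
where the inequality uses that every entry of $\mSigma_n$ is bounded in absolute value by $\max_{\vxp\in\spS}\sigma_n^2(\vxp)$ (Cauchy--Schwarz on the PSD matrix $\mSigma_n$). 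The factor $\norm{\vw}_1^2$ is fixed by the prior and independent of $n$, so the problem reduces to bounding the largest posterior variance over $\spS$.

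\emph{Step 3 (information-gain budget).} The crucial observation is that for any $\vxp \in \spS \subseteq \spA$ the \itl objective equals the self-information, since $y_{\vxp}$ is conditionally independent of $\vfsub{\spA\setminus\{\vxp\}}$ given $f_{\vxp}$:
\[
  \Ism{\vfsub{\spA}}{y_{\vxp}}[\spD_{n-1}] = \Ism{f_{\vxp}}{y_{\vxp}}[\spD_{n-1}] = \tfrac{1}{2}\log\parentheses*{1+\sigma_{n-1}^2(\vxp)/\rho^2(\vxp)}.
\]
Hence \itl selects the point of maximal noise-weighted posterior variance, and with $M_i \defeq \max_{\vxp\in\spS}\sigma_i^2(\vxp)/\rho^2(\vxp)$ we get $\Ism{\vfsub{\spA}}{y_{\vx_i}}[\spD_{i-1}] = \tfrac12\log\parentheses*{1+M_{i-1}}$. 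Posterior variances are monotonically non-increasing, so $M_i$ decreases in $i$ and the per-round gain is non-increasing; combining this with the chain rule, $\sum_{i=1}^n \Ism{\vfsub{\spA}}{y_{\vx_i}}[\spD_{i-1}] = \Ism{\vfsub{\spA}}{\vysub{X_n}} \leq \gamma_n$ (the last step using $X_n\subseteq\spS$ and the noise-entropy bound), I would conclude
\[
  \tfrac{1}{2}\log\parentheses*{1+M_{n-1}} = \Ism{\vfsub{\spA}}{y_{\vx_n}}[\spD_{n-1}] \leq \tfrac{1}{n}\Ism{\vfsub{\spA}}{\vysub{X_n}} \leq \gamma_n/n.
\]
Linearizing $\log(1+\cdot)$ over the bounded range of $M_{n-1}$ (where a mild $\epsilon$-dependence of the constant may enter) yields $\max_{\vxp\in\spS}\sigma_n^2(\vxp) \leq (\max_{\vxp\in\spS}\rho^2(\vxp))\,M_{n-1} \leq C'\gamma_n/n$. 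Together with Steps~1--2 this gives $\sigma_n^2(\vx) \leq \eta^2_{\spS}(\vx) + C\gamma_n/n$ with $C$ collecting $\norm{\vw}_1^2$, the noise bounds, and the linearization constant --- in fact stronger than the claimed $\gamma_n/\sqrt{n}$.

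\emph{Main obstacle.} The delicate point is the extrapolation in Step~2: controlling the reducible uncertainty at $\vx \in \spA\setminus\spS$ purely through in-sample variances, and verifying that $\norm{\vw}_1^2$ is finite and $n$-uniform (it is, being fixed by the kernel). A secondary subtlety is transferring the \emph{aggregate} information-gain bound to a \emph{worst-case per-round} rate at round $n$; the monotonicity of $M_i$ is exactly what lets me pin the budget to round $n$ rather than to some unknown earlier round, and is also what removes any need for a $\sqrt{n}$ slack.
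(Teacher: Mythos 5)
Your proof is correct and, on the key extrapolation step, takes a genuinely different --- and in fact sharper --- route than the paper. Your Step 1 (law of total variance together with $\spD_n \perp f_{\vx} \mid \vfsub{\spS}$) and Step 3 (the collapse of \itl to noise-weighted uncertainty sampling when $\spS \subseteq \spA$, the monotonicity-plus-chain-rule argument giving $\Gamma_{n-1} \leq \gamma_n / n$, and the linearization of $\log(1+\cdot)$) appear almost verbatim in the paper as \cref{lem:approx_markov_boundary_property}, \cref{thm:objective_convergence}, and \cref{lem:convergence_within_S}. The divergence is in Step 2: rather than bounding your reducible term $\transpose{\vw} \mSigma_n \vw$ (with $\mSigma_n$ the posterior covariance of $\vfsub{\spS}$) directly by $\norm{\vw}_1^2 \max_{\vxp \in \spS} \sigma_n^2(\vxp)$ using the $n$-independent weights $\vw = \inv{\mKsub{\spS\spS}} \vk_{\spS\vx}$, the paper constructs an $\epsilon$-approximate Markov boundary $B_{n,\epsilon}(\vx) \subseteq \spS$ of size $b_\epsilon$ (\cref{lem:approx_markov_boundary}), bounds $\sigma_n^2(\vx) \leq 2\sigma^2 b_\epsilon \Gamma_n + \eta_{\spS}^2(\vx) + \epsilon$, and then trades $\epsilon$ against $b_\epsilon$ by taking $b_\epsilon \sim \sqrt{n}$ --- which is exactly where the $\sqrt{n}$ in the statement comes from. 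Your route avoids that trade-off and yields the stronger reducible rate $\gamma_n / n$; the constant $\norm{\vw}_1^2 \leq \abs{\spS}^2 \sigma^4 / \lambda_{\min}^2(\Var{\vfsub{\spS}})$ you need is essentially the quantity the paper buries in \cref{eq:markov_boundary_size_condition}, and both arguments require $\Var{\vfsub{\spS}}$ to be nonsingular. What the Markov-boundary detour buys the authors is modularity: it only requires that no \emph{small subset} of $\spS$ remains informative about $\vfsub{\spA}$, not that every in-sample posterior variance has collapsed --- the latter being exactly the property that fails once $\spS \not\subseteq \spA$ and \itl no longer acts as uncertainty sampling on all of $\spS$. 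Within the stated scope ($\spS \subseteq \spA$) your argument is complete and subsumes the claimed bound, since $\gamma_n / n \leq \gamma_n / \sqrt{n}$.
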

Intuitively, \cref{eq:variance_convergence:extrapolation} of \cref{thm:variance_convergence} can be understood as bounding an epistemic ``generalization gap'' \citep{wainwright2019high} of the learner.
The reducible uncertainty converges to zero, e.g., for Gaussian and smooth Matérn kernels, at all prediction targets $\vx \in \spA$.
We provide a formal proof of \cref{thm:variance_convergence} in \cref{sec:proofs:variance_convergence}.\looseness=-1

\subsection{Agnostic Setting}\label{sec:algorithm:agnostic}

The result from the GP setting translates also to the agnostic setting, where the ``ground truth'' $\opt{f}$ may be any sufficiently regular fixed function on $\spX$.\footnote{Here $\opt{f}(\vx)$ denotes the mean observation ${y_{\vx} = \opt{f}(\vx) + \epsilon_{\vx}}$}
In this case, we use the model~$f$ from \cref{sec:itl:gp_setting} as a (misspecified) model of~$\opt{f}$, with some kernel $k$ and zero mean function ${\mu(\cdot) = 0}$.
We denote by $\mu_n(\vx) \defeq \E{f(\vx)}[\spD_n]$ the posterior mean of~$f$.
W.l.o.g. we assume in the following result that the prior variance is bounded, i.e., $\Var{f(\vx)} \leq 1$.\footnote{The results can be generalized to kernels with $\Var{f(\vx)} \leq c$ \citep{chowdhury2017kernelized}.}\looseness=-1

\begin{theorem}[Bound on generalization error for \itl, following \cite{abbasi2013online,chowdhury2017kernelized}]\label{thm:width_convergence}
    Pick any $\delta \in (0,1)$.
    Assume that $\opt{f}$ lies in the reproducing kernel Hilbert space $\spH_k(\spX)$ of the kernel $k$ with norm $\smash{\norm{\opt{f}}_k < \infty}$, the noise $\varepsilon_n$ is conditionally $\rho$-sub-Gaussian, and $\gamma_n$ is sublinear in $n$.
    Let $\smash{\beta_{n}(\delta) = \norm{\opt{f}}_k + \rho \sqrt{2(\gamma_{n} + 1 + \log(1 / \delta))}}$.
    Then, for any $n \geq 1$ and $\vx \in \spA$, jointly with probability at least $1-\delta$, \begin{align*}
        |\opt{f}(\vx) - \mu_{n}(\vx)| \leq \beta_{n}(\delta) \Big[ \underbrace{\eta_{\spS}(\vx)}_{\text{irreducible}} + \underbrace{\nu(n)}_{\text{reducible}} \Big]
    \end{align*} where $\nu^2(n)$ denotes the reducible part of \cref{eq:variance_convergence:extrapolation}.
\end{theorem}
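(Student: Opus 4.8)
The plan is to combine two essentially independent ingredients: the \emph{deterministic} variance bound already established in \cref{thm:variance_convergence}, and a \emph{probabilistic} confidence band that controls the deviation $|\opt{f}(\vx) - \mu_n(\vx)|$ in terms of the posterior standard deviation $\sigma_n(\vx)$. The first governs how well the surrogate GP model can \emph{in principle} resolve $f$ at a target $\vx \in \spA$ given data in $\spS$; the second certifies that the (misspecified) GP posterior mean actually tracks the RKHS ground truth $\opt{f}$ with high probability.

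First I would invoke the standard self-normalized concentration bound for RKHS-valued regression under conditionally $\rho$-sub-Gaussian noise \citep{abbasi2013online,chowdhury2017kernelized}. This yields that, jointly over all rounds $n \geq 1$ and all $\vx \in \spX$, with probability at least $1 - \delta$,
\begin{align*}
    |\opt{f}(\vx) - \mu_n(\vx)| \leq \beta_n(\delta)\, \sigma_n(\vx),
\end{align*}
with exactly the $\beta_n(\delta)$ stated in the theorem. The uniformity over $n$ and $\vx$ --- crucial for the ``jointly'' qualifier --- is precisely what this martingale argument provides, since the width $\gamma_n + \log(1/\delta)$ already absorbs the union over rounds.

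Next I would feed in \cref{thm:variance_convergence}. The key observation is that the \itl query rule is \emph{value-independent}: by \cref{eq:objective_gp_setting}, the GP mutual-information objective depends only on posterior \emph{variances}, which in turn depend only on the previously selected \emph{locations}, not on the observed labels. Hence the realized trajectory $\vx_1, \dots, \vx_n$ is a deterministic function of $k$, $\rho^2$, $\spA$, and $\spS$, so the purely deterministic bound $\sigma_n^2(\vx) \leq \eta^2_{\spS}(\vx) + \nu^2(n)$ applies verbatim along this trajectory, regardless of $\opt{f}$. Taking square roots and using subadditivity $\sqrt{a+b} \leq \sqrt{a} + \sqrt{b}$ gives $\sigma_n(\vx) \leq \eta_{\spS}(\vx) + \nu(n)$, and substituting into the confidence band yields the claim.

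The main obstacle is not any single calculation but the clean \emph{decoupling} of the two error sources: one must argue that the deterministic variance bound, proved for the well-specified GP model, transfers unchanged to the agnostic/misspecified regime. This is exactly where value-independence of \itl is essential --- it guarantees that the event on which the confidence band holds (a statement about the noise) is independent of the deterministic geometry controlling $\sigma_n$, so the two bounds may be multiplied without any further conditioning or union argument. A secondary point to verify is that the sublinearity of $\gamma_n$ assumed here is the same hypothesis that makes $\nu(n) \to 0$ in \cref{thm:variance_convergence}, so that the reducible term genuinely vanishes.
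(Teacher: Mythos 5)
Your proposal is correct and follows essentially the same route as the paper: it invokes the self-normalized confidence bound of \cite{abbasi2013online,chowdhury2017kernelized} (the paper's \cref{lem:confidence_intervals}) to get $|\opt{f}(\vx) - \mu_n(\vx)| \leq \beta_n(\delta)\,\sigma_n(\vx)$ uniformly over $n$ and $\vx$, and then substitutes the variance bound of \cref{thm:variance_convergence} together with subadditivity of the square root. Your additional remark on the value-independence of the \itl trajectory, which justifies transferring the deterministic variance bound to the misspecified regime, is left implicit in the paper but is a sound and worthwhile observation.
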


We provide a formal proof of \cref{thm:width_convergence} in \cref{sec:proofs:width_convergence}.
\Cref{thm:width_convergence} generalizes confidence bounds of prior works to the extrapolation setting, where some prediction targets~$\vx \in \spA$ lie outside the sample space~$\spS$.
\Cref{thm:width_convergence} can therefore be interpreted as a \emph{generalization bound} which holds uniformly for all functions $\opt{f}$ with $\norm{\opt{f}}_k < \infty$.
For prediction targets~${\vx \in \spA \cap \spS}$, the irreducible uncertainty vanishes, and we recover previous results from the setting $\spS = \spA$.\looseness=-1

\Cref{thm:variance_convergence,thm:width_convergence} show that \itl efficiently learns $f$ at the prediction targets $\spA$ for large classes of ``sufficiently regular'' functions $f$.
In the following, we validate these results experimentally by showing that \itl exhibits strong empirical performance for fine-tuning large neural networks.\looseness=-1

\section{Few-Shot Fine-Tuning of Neural Networks}\label{sec:nns}

As discussed in \cref{sec:introduction}, the fine-tuning of large neural networks to downstream tasks can be framed as a transductive active learning problem.
In our experiments, we consider a supervised learning problem, where the function $f$ maps inputs ${\vx \in \spX}$ to outputs ${y \in \spY}$.\footnote{We remark that also the semi-supervised or unsupervised fine-tuning of models can be framed as a transductive active learning problem.}
We have access to noisy samples from a training set~$\spS$ on~$\spX$, and we would like to learn~$f$ such that our estimate minimizes a given risk measure, such as classification error, with respect to a test distribution~$\spPA$ on~$\spX$.
Ideally, $\spPS$ and~$\spPA$ are identical, however, in many real-world cases $\spPS$ differs substantially from~$\spPA$.
This is for instance the case when samples from~$\spPA$ are expensive to obtain, such as in medical imaging.
The goal is to actively and efficiently sample from~$\spS$ to minimize risk with respect to~$\spPA$.\footnote{The setting with target distributions $\spPA$ can be reduced to considering target sets $\spA$, cf.~\cref{sec:generalizations:roi}.}
We show in this section that \itl can learn $f$ from only \emph{few examples} from $\spS$.\looseness -1%

\paragraph{How can we leverage the latent structure learned by the pre-trained model?}

As common in related work, we approximate the (pre-trained) neural network (NN) $f(\cdot; \vtheta)$ as a linear function in a latent embedding space, $\smash{f(\vx; \vtheta) \approx \transpose{\vbeta} \vphi_{\vtheta}(\vx)}$, with weights $\vbeta \in \R^p$ and embeddings $\vphi_{\vtheta} : \spX \to \R^p$.
Common choices of embeddings include last-layer embeddings \citep{devlin2018bert,holzmuller2023framework}, neural tangent embeddings arising from neural tangent kernels \citep{jacot2018neural} which are motivated by their relationship to the training and fine-tuning of ultra-wide NNs \citep{arora2019exact,lee2019wide,khan2019approximate,he2020bayesian,malladi2023kernel}, and loss gradient embeddings \citep{ash2020deep}.
We provide a comprehensive overview of embeddings in \cref{sec:nns_appendx:embeddings}.
Now, supposing the prior $\vbeta \sim \N{\vzero}{\mSigma}$, often with ${\mSigma = \mI}$~\citep{khan2019approximate,he2020bayesian,antoran2022adapting,wei2022more}, this approximation of $f$ is a Gaussian process with kernel $k(\vx,\vxp) = \transpose{\vphi_{\vtheta}(\vx)} \mSigma \vphi_{\vtheta}(\vxp)$ which quantifies the similarity between points in terms of their alignment in the learned latent space.
Note that the correlation $\smash{k(\vx, \vxp) / \sqrt{k(\vx,\vx) k(\vxp,\vxp)}}$ between two points $\vx, \vxp$ is equal to the cosine similarity of their embeddings.

In this context, \cref{thm:variance_convergence} bounds the epistemic posterior uncertainty about a prediction using the approximation $\smash{\transpose{\vbeta} \vphi_{\vtheta}(\vx)}$, given that the model is trained using data selected by \itl.
\Cref{thm:width_convergence} bounds the generalization error when using the posterior mean of $\vbeta$ for prediction.
This extends recent work which has studied estimators of this generalization error \citep{wei2022more}.\looseness -1

\paragraph{Batch selection: Diversity via conditional embeddings}

Efficient labeling and training necessitates a batch-wise selection of inputs.
The selection of a batch of size $b > 1$ can be seen as an individual \emph{non-adaptive} active learning problem, and significant recent work has shown that batch diversity is crucial in this setting \citep{ash2020deep,zanette2021design,holzmuller2023framework,pacchiano2024experiment}.
A batch-wise selection strategy is formalized by the following NP-hard non-adaptive transductive active learning problem \citep{krause2014submodular,golovin2011adaptive}: \begin{align}
  B_{n} = \argmax_{\substack{B \subseteq \spS, \abs{B} = b}} \Ism{\vfsub{\spA}}{\vysub{B}}[\spD_{n-1}]. \label{eq:batch_selection}
\end{align}
\itl solves this optimization problem greedily: \begin{align}
  \vx_{n,i} = \argmax_{\vx \in \spS} \Ism{\vfsub{\spA}}{y_{\vx}}[\spD_{n-1}, \vysub{\vx_{n,1:i-1}}].\label{eq:greedy_batch_selection}
\end{align}
We show in \cref{sec:nonmyopic} that the approximation error of $B'_n = \vx_{n,1:b}$ can be bounded in terms of the submodularity ratio of \itl \citep{das2018approximate}.
For example, if $\spS \subseteq \spA$, the problem from \cref{eq:batch_selection} is submodular and the $B'_n$ is a $(1-1/e)$-approximation of $B_n$ \citep{nemhauser1978analysis,krause2014submodular}.
The batch $B_n$, and therefore also $B'_n$, is diverse and informative by design.
We discuss an efficient iterative implementation of \cref{eq:greedy_batch_selection} in \cref{sec:nns_appendx:batch_selection}.
Prior work has shown that the greedy solution $B'_n$ is also competitive with a fully sequential ``batchless'' decision rule \citep{chen2013near,esfandiari2021adaptivity}.\footnote{They prove this for the case where \cref{eq:batch_selection} is submodular, but their results readily generalize to ``approximate'' submodularity.}\looseness=-1

\subsection{Experiments}

Our empirical evaluation is motivated by the following practical example:
We deploy a pre-trained image classifier to user's phones who use it within their local environment.
We would like to locally fine-tune a user's model to their environment.
Since the users' images~$\spA$ are unlabeled, this requires selecting a small number of relevant and diverse images from the set of labeled images~$\spS$.
As such, we will focus here on the setting where the points in our test set do not lie in our training set~(i.e.,~${\spA \cap \spS = \emptyset}$), while \cite{hubotter2024information} discusses also alternative settings such as active domain adaptation.\looseness=-1

\paragraph{Testbeds \& architectures}

We use the MNIST \citep{lecun1998mnist} and CIFAR-100 \citep{krizhevsky2009learning} datasets as testbeds.
In both cases, we take $\spS$ to be the training set, and we consider the task of learning the digits $3$, $6$, and $9$ (MNIST) or the first $10$ categories of CIFAR-100.\footnote{That is, we restrict $\spPA$ to the support of points with labels $\{3,6,9\}$ (MNIST) or labels $\{0, \dots, 9\}$ (CIFAR-100) and train a neural network using few examples drawn from the training set $\spS$.}
For MNIST, we train a simple convolutional neural network with ReLU activations, three convolutional layers with max-pooling, and two fully-connected layers.
For CIFAR-100, we fine-tune an EfficientNet-B0 \citep{tan2019efficientnet} pre-trained on ImageNet \citep{deng2009imagenet}, augmented by a final fully-connected layer.
We train the NNs using the cross-entropy loss and the ADAM optimizer \citep{kingma2014adam}.\looseness=-1

\paragraph{Results}

In \cref{fig:nns}, We compare against \textbf{(i)} active learning methods which largely aim for sample diversity but which are not directed towards the target distribution $\spPA$ \citep[e.g., BADGE;][]{ash2020deep}, and \textbf{(ii)} search methods that aim to retrieve the most relevant samples from $\spS$ with respect to the targets $\spPA$ \citep[e.g., maximizing cosine similarity to target embeddings as is common in vector databases;][]{settles2008analysis,johnson2019billion}. \textsc{InformationDensity} \citep[ID,][]{settles2008analysis} is a heuristic approach aiming to combine \textbf{(i)} diversity and \textbf{(ii)} relevance.
We observe that \itl and \ctl consistently and significantly outperform random sampling from $\spS$ as well as all baselines.
We see that relevance-based methods such as \textsc{CosineSimilarity} have an initial advantage over \textsc{Random} but for batch sizes larger than $1$ they quickly fall behind due to diminishing informativeness of the selected data.
In contrast, diversity-based methods such as \textsc{BADGE} are more competitive with \textsc{Random} but do not explicitly aim to retrieve relevant samples.

\begin{figure*}[t]
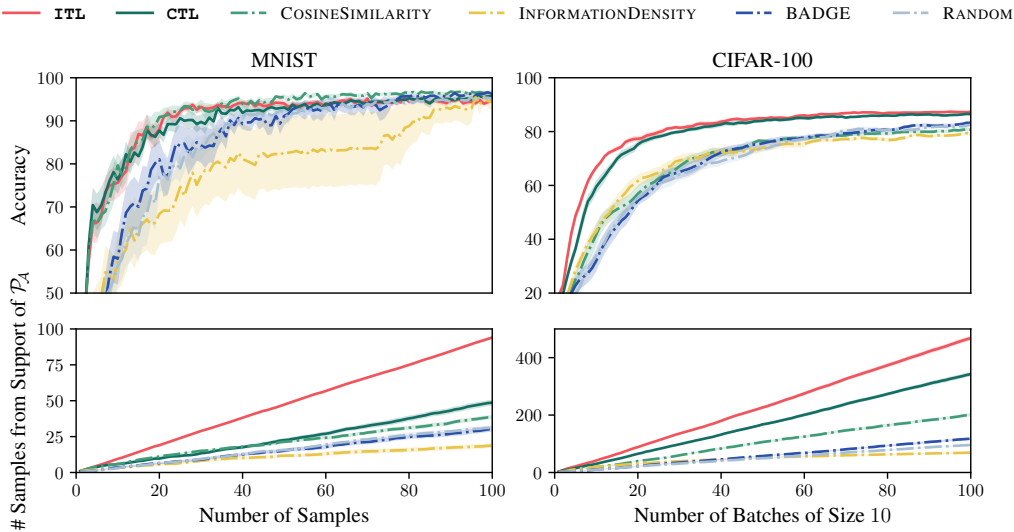

    \incplt[\textwidth]{nns}
    \caption{Few-shot training of NNs on MNIST (left) and CIFAR-100 (right). \textsc{Random} selects each observation uniformly at random from $\spPS$. The batch size is $1$ for MNIST and $10$ for CIFAR-100. Uncertainty bands correspond to one standard error over $10$ random seeds. We see that \itl significantly outperforms the state-of-the-art, and in particular, retrieves substantially more samples from the support of $\spPA$ than competing methods. This trend becomes even more pronounced in more difficult large-scale learning tasks (cf.~\cref{fig:nns_imbalanced_train} in \cref{sec:nns_appendix}). See \cref{sec:nns_appendix} for details and additional experiments.}
    \label{fig:nns}
\end{figure*}

\begin{figure*}[]
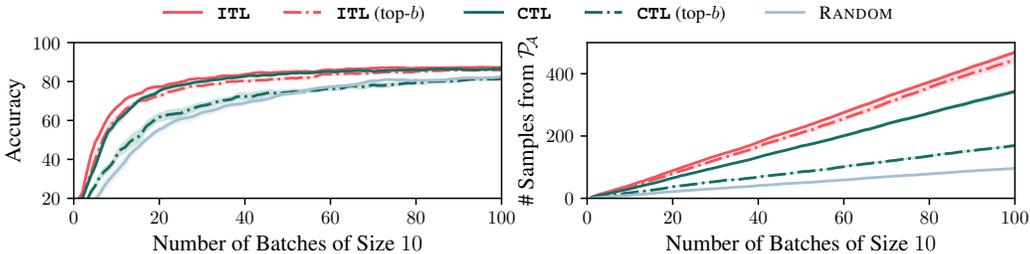

  \incplt[\textwidth]{nns_batch_selection}
  \caption{Batch selection via conditional embeddings improves substantially over selecting the top-$b$ candidates proposed by the decision rule. This is the CIFAR-100 experiment (where $b=10$).}
  \label{fig:nns_batch_selection}
\end{figure*}

In \cref{fig:nns_batch_selection}, we compare \emph{batch selection via conditional embeddings} (\textsc{BaCE}) to selecting the top-$b$ points according to the decision rule (which does \emph{not} yield diverse batches).
We observe a significant improvement in accuracy and data retrieval when using \textsc{BaCE}.
We expect the gap between both approaches to widen further with larger batch sizes.

\paragraph{Synthesizing sample relevance and diversity}

Our proposed methods synthesize approaches to coverage (promoting \emph{diverse} samples) and search (aiming for \emph{relevant} samples with respect to a given query~$\spA$) which leads to the significant improvement upon the state-of-the-art in \cref{fig:nns}.
Notably, for a batch size and query size of $1$ and if correlations are non-negative, \itl, \ctl, and the canonical cosine similarity are equivalent.
\ctl can be seen as a direct generalization of cosine similarity-based retrieval to batch and query sizes larger than one.
In contrast to \ctl, \itl may also sample points which exhibit a strong negative correlation (which is also informative).\looseness=-1

We observe empirically that \itl obtains samples from $\spPA$ at more than twice the rate of \textsc{CosineSimilarity}, which translates to a significant improvement in accuracy in more difficult learning tasks, while requiring fewer (labeled) samples from $\spS$.
This phenomenon manifests for both MNIST and CIFAR-100, as well as imbalanced datasets $\spS$ or imbalanced reference samples from $\spPA$ (cf. \cref{sec:nns_appendix:additional_experiments}).
The improvement in accuracy appears to increase in the large-data regime, where the learning tasks become more~difficult.
Akin to a previously identified scaling trend with size of the pre-training dataset \citep{tamkin2022active}, this suggests a potential scaling trend where the improvement of \itl over random batch selection grows as models are fine-tuned on a larger pool of data.\looseness=-1

\paragraph{Towards task-driven few-shot learning}

Being able to efficiently and automatically select data allows dynamic few-shot fine-tuning to individual tasks~\citep{vinyals2016matching,hardt2024test}, e.g., fine-tuning the model to each test point / query / prompt.
Such task-driven few-shot learning can be seen as a form of ``memory recall'' akin to associative memory~\citep{hopfield1982neural}.
Our results are a first indication that task-driven learning can lead to substantial performance gains, and we believe that this is a promising direction for future studies.\looseness=-1

\section{Related Work}\label{sec:related_work}

\paragraph{Data retrieval}

Substantial prior work has studied data retrieval, e.g., in vector databases, using approximate nearest neighbor search~\citep{johnson2019billion,guo2020accelerating,aumuller2020ann} where cosine similarity is a standard metric.
Following the link between cosine similarity and \ctl, \itl can be seen as a generalization of cosine similarity-based retrieval to batch and query sizes larger than one.
Few-shot fine-tuning can be implemented also for sequences of \emph{different} tasks \citep{vinyals2016matching}, and as such task-driven few-shot fine-tuning can be seen as a form of ``memory recall'' akin to associative memory \citep{hopfield1982neural}.\looseness=-1

\paragraph{Undirected active learning}

Our work extends several previous works in active learning and experimental design \citep{chaloner1995bayesian,settles2009active}.
Uncertainty sampling \citep[US, ][]{lewis1994heterogeneous} is one of the most popular active learning methods.
US selects points $\vx$ with high \emph{prior} uncertainty: ${\vx_{n} = \argmax_{\vx \in \spS} \sigma_{n-1}^2(\vx)}$.
This is in stark contrast to \itl which selects points~$\vx$ that minimize \emph{posterior} uncertainty about $\spA$.
It can be seen that US is the special ``undirected'' (i.e., $\spS \subseteq \spA$) case of \itl when the observation noise is homoscedastic (cf.~\cref{sec:proofs:undirected_itl}).

Several works have previously found entropy-based decision rules to be useful for undirected active learning~\citep{krause2007nonmyopic,guo2007optimistic,krause2008near} and semi-supervised learning~\citep{grandvalet2004semi}.
A variance-based variant of \itl has previously been proposed by \cite{yu2006active} in the special case of global active learning without proving theoretical guarantees.
\cite{shoham2023experimental} recently analyzed experimental designs for global one-shot learning in overparameterized models.\looseness=-1

Substantial work on active learning has studied entropy-based criteria in \emph{parameter-space}, most notably BALD \citep{houlsby2011bayesian,kirsch2019batchbald}, which selects actions ${\vx_n = \argmax_{\vx \in \spX} \I{\vtheta}{y_{\vx}}[\spD_{n-1}]}$, where $\vtheta$ is the random parameter vector of a parametric model (e.g., via Bayesian deep learning).
Such methods are inherently ``undirected'' in the sense that they do not facilitate learning on specific prediction targets.
In contrast, \itl operates in \emph{output-space} where it is straightforward to specify prediction targets, and which is computationally easier.\looseness=-1

\paragraph{Transductive active learning}

In contrast, \itl operates in \emph{output-space} where it is straightforward to specify prediction targets, and which is computationally easier.
Special cases of \itl when ${\spS = \spX}$ and ${\abs{\spA} = 1}$ have been proposed in the foundational work of \cite{mackay1992information} on ``directed'' output-space active learning, and later studied empirically by \cite{wang2021beyond} and \cite{smith2023prediction}.
\cite{kothawade2021similar} recently evaluated \itl empirically on realistic image classification tasks in a pre-training context.
\looseness=-1

\paragraph{Other work on directed active learning}

Special cases of \itl when ${\spS = \spX}$ and ${\abs{\spA} = 1}$ have been proposed in the foundational work of \cite{mackay1992information} on ``directed'' output-space active learning, and later studied empirically by \cite{wang2021beyond} and \cite{smith2023prediction}.
Recently, ``directed'' active learning methods have been proposed for the problem of determining the optimum of an unknown function, also known as best-arm identification \citep{audibert2010best} or pure exploration bandits \citep{bubeck2009pure}.
Entropy search methods \citep{hennig2012entropy,hernandez2014predictive} are widely used and select ${\vx_n = \argmax_{\vx \in \spX} \I{\vx^*}{y_{\vx}}[\spD_{n-1}]}$ in \emph{input-space} where ${\vx^* = \argmax_{\vx} f_{\vx}}$.
Similarly to \itl, \emph{output-space} entropy search methods \citep{hoffman2015output,wang2017max}, which select ${\vx_n = \argmax_{\vx \in \spX} \I{f^*}{y_{\vx}}[\spD_{n-1}]}$ with ${f^* = \max_{\vx} f_{\vx}}$, are more computationally tractable.
In fact, output-space entropy search is a special case of \itl with a stochastic target space \citep{hubotter2024information}.
When subsampling target spaces via Thompson sampling~\citep{thompson1933likelihood,russo2018tutorial}, \itl approximates these output-space methods.
\cite{bogunovic2016truncated} analyze \textsc{TruVar} in the context of Bayesian optimization and level set estimation.
\textsc{TruVar} is akin to a variance-based variant of \itl with a similar notion of ``target space'', but their algorithm and analysis rely on a threshold scheme which requires that ${\spA \subseteq \spS}$.
\cite{fiez2019sequential} introduce the \emph{transductive linear bandit} problem, which is a special case of transductive active learning limited to a linear function class and with the objective of determining the maximum within an initial candidate set.
We mention additional more loosely related works in \cref{sec:related_work_additional}.\looseness=-1

\section{Limitations}\label{sec:limitations}

\paragraph{Better models}

In this work, we focus solely on sequential decision-making \emph{given} some model, rather than asking how one should construct such a model so that it is representative of the ground truth.
Learning stochastic models and approximations thereof have been the subject of much recent work \citep{
blundell2015weight,
maddox2019simple,
daxberger2021laplace,
antoran2022adapting,
lin2023sampling}.
Specifically, \itl benefits if the model $f$ captures the right ``correlations'' between points and if its ``uncertainty'' about the prediction at a specific point is accurate.\footnote{For example, the results from \cref{sec:nns} rely on embeddings being trained to capture the distribution shift.}\looseness=-1

\paragraph{Efficient computation of \itl with large $\spA$}

Naïve computation of the \itl decision rule for $m$ points within~${\spS}$ takes $\BigO{\abs{\spA}^3 + m}$ time and $\BigO{\abs{\spA}^2}$ space~(cf.~\cref{sec:computational_complexity}) which is prohibitive for large $\abs{\spA}$.
A practical solution for large $\spA$ is to subsample $\spA$ (such as in \cref{sec:nns}) within each round, and we provide a formal analysis of this approach for large $\spA$ in \cref{sec:generalizations:roi}.\footnote{In our NN experiments, we subsample $\spA$ to a small set with size equal to the number of classes.}\looseness=-1

\section{Conclusion and Future Work}\label{sec:conclusion}

We framed the active fine-tuning problem as a generalization of active learning, \emph{transductive active learning}, and proposed \itl, an approach which samples adaptively to maximize information gained about specified downstream tasks.
We proved novel generalization bounds which may be of independent interest for active learning.
Finally, we showed that \itl synthesizes prior works on \emph{diversity-oriented} active learning and \emph{relevance-oriented} retrieval, and demonstrated empirically that \itl substantially improves the efficiency of fine-tuning neural networks.\looseness=-1

We believe that the transductive active learning methodology can also improve data-efficiency \emph{beyond supervised learning}, such as in semi-supervised or self-supervised learning and reinforcement learning from human feedback \citep{dwaracherla2024efficient}, and leave this as an exciting direction for future work.
The synthesis of retrieval and learning methods suggests applicability \emph{beyond fine-tuning} such as for in-context learning \citep{brown2020language} and retrieval-augmented generation \citep{lewis2020retrieval} as well as pre-training.\looseness=-1

\section*{Acknowledgements}

Many thanks to Armin Lederer, Johannes Kirschner, Jonas Rothfuss, Lars Lorch, Manish Prajapat, Nicolas~Emmenegger, Parnian Kassraie, and Scott Sussex for their insightful feedback on different versions of this manuscript, as well as Anton Baumann for helpful discussions.

This project was supported in part by the European Research Council (ERC) under the European Union's Horizon 2020 research and Innovation Program Grant agreement no.~815943, the Swiss National Science Foundation under NCCR Automation, grant agreement~51NF40~180545, and by a grant of the Hasler foundation (grant no.~21039).
Jonas Hübotter was supported in part by the German Academic Scholarship Foundation (Studienstiftung).

\bibliography{iclr2024_conference}
\bibliographystyle{iclr2024_conference}

\clearpage\appendix
\section*{\LARGE Appendices}

A general principle of ``transductive learning'' was already formulated by the famous computer scientist Vladimir Vapnik in the 20th century.
Vapnik proposes the following ``imperative for a complex world'':

\begin{chapquote}{\cite{vapnik1982estimation}}
    When solving a problem of interest, do not solve a more general problem as an intermediate step. Try to get the answer that you really need but not a more general one.
\end{chapquote}

These appendices provide additional background, proofs, experiment details, and ablation studies.

\section*{Contents}
\startcontents
\printcontents{}{0}[2]{}
\clearpage

\section{Additional Related Work}\label{sec:related_work_additional}

The general principle of non-active ``transductive learning'' was introduced by \cite{vapnik1982estimation}.
The notion of ``target'' from transductive active learning is akin to the notion of ``task'' in curriculum learning \citep{bengio2009curriculum,graves2017automated,soviany2022curriculum}.
The study of settings where the irreducible uncertainty is zero is related to the study of estimability in experimental design \citep{graybill1961introduction,mutny2022experimental}.
In feature selection, selecting features that maximize information gain with respect to a to-be-predicted label is a standard approach \citep{peng2005feature,vergara2014review,beraha2019feature} which is akin to~\itl.
Balancing relevance and informativeness, similarly to \itl, is also important for data pruning \citep{zheng2023coverage}.
Transductive active learning is complimentary to other learning methodologies, such as semi-supervised learning \citep{gao2020consistency}, self-supervised learning \citep{shwartz2023compress,balestriero2023cookbook}, and meta-learning \citep{kaddour2020probabilistic,rothfuss2023meta}.\looseness=-1

\section{Background}\label{sec:definitions}

\subsection{Information Theory}

Throughout this work, $\log$ denotes the natural logarithm.
Given random vectors $\vx$ and $\vy$, we denote by \begin{align*}
    \H{\vx} &\defeq \E[p(\vx)]{- \log p(\vx)}, \\
    \H{\vx}[\vy] &\defeq \E[p(\vx, \vy)]{- \log p(\vx \mid \vy)}, \quad\text{and} \\
    \I{\vx}{\vy} &\defeq \H{\vx} - \H{\vx}[\vy]
\end{align*} the (differential) entropy, conditional entropy, and information gain, respectively \citep{cover1999elements}.\footnote{One has to be careful to ensure that $\I{\vx}{\vy}$ exists, i.e., $\abs{\I{\vx}{\vy}} < \infty$. We will assume that this is the case throughout this work. When $\vx$ and $\vy$ are jointly Gaussian, this is satisfied when the noise variance $\rho^2$ is positive.}

\subsection{Gaussian Processes}\label{sec:definitions:gps}

The stochastic process $f$ is a Gaussian process (GP, \cite{williams2006gaussian}), denoted ${f \sim \GP{\mu}{k}}$, with mean function $\mu$ and kernel $k$ if for any finite subset ${X = \{\vx_1, \dots, \vx_n\} \subseteq \spX}$, $\vfsub{X} \sim \N{\vmusub{X}}{\mKsub{XX}}$ is jointly Gaussian with mean vector $\vmusub{X}(i) = \mu(\vx_i)$ and covariance matrix $\mKsub{XX}(i, j) = k(\vx_i, \vx_j)$.

In the following, we formalize the assumptions from the GP setting (cf.~\cref{sec:itl:gp_setting}).

\begin{assumption}[Gaussian prior]\label{asm:bayesian_prior}
   We assume that ${f \sim \GP{\mu}{k}}$ with known mean function $\mu$ and kernel $k$.
\end{assumption}

\begin{assumption}[Gaussian noise]\label{asm:bayesian_noise}
   We assume that the noise $\varepsilon_{\vx}$ is mutually independent and zero-mean Gaussian with known variance $\rho^2(\vx) > 0$.
   We write ${\mPsub{X} = \diag{\rho^2(\vx_1), \dots, \rho^2(\vx_n)}}$.
\end{assumption}

Under \cref{asm:bayesian_prior,asm:bayesian_noise}, the posterior distribution of~$f$ after observing points $X$ is $\GP{\mu_n}{k_n}$ with \begin{align*}
    \mu_n(\vx) &= \mu(\vx) + \mKsub{\vx X} \inv{(\mKsub{XX} + \mPsub{X})} (\vysub{X} - \vmusub{X}), \\
    k_n(\vx,\vxp) &= k(\vx,\vxp) - \mKsub{\vx X} \inv{(\mKsub{XX} + \mPsub{X})} \mKsub{X\vxp}, \\
    \sigma_n^2(\vx) &= k_n(\vx,\vx).
\end{align*}
For Gaussian random vectors $\vf$ and $\vy$, the entropy is ${\H{\vf} = \frac{n}{2} \log(2 \pi e) + \frac{1}{2} \log \det{\Var{\vf}}}$, the information gain is ${\I{\vf}{\vy} = \frac{1}{2} \parentheses*{\log\det{\Var{\vy}} - \log\det{\Var{\vy \mid \vf}}}}$, and \begin{align*}
    \gamma_n &= \max_{\substack{X \subseteq \spX \\ |X| \leq n}} \frac{1}{2} \log\det{\mI + \inv{\mPsub{X}} \mKsub{XX}}.
\end{align*}

\section{Provably Efficient Non-adaptive Batch Selection and its Relation to Submodularity}\label{sec:nonmyopic}

Recall the non-adaptive optimization problem \begin{align*}
  B_{n,k} = \argmax_{\substack{B \subseteq \spS \\ \abs{B} = k}} \I{\vfsub{\spA}}{\vysub{B}}[\spD_{n-1}]
\end{align*} from \cref{eq:batch_selection}, and denote by ${B'_{n,k} = \vx_{n,1:k}}$ the greedy approximation from \cref{eq:greedy_batch_selection}.
Note that the objective function \begin{align*}
  F_n(B) &\defeq \I{\vfsub{\spA}}{\vysub{B}}[\spD_{n-1}] = \H{\vfsub{\spA}}[\spD_{n-1}] - \H{\vfsub{\spA}}[\spD_{n-1}, \vysub{B}]
\end{align*} is non-negative and monotone,\footnote{Formally, ${F_n(B) \geq 0}$ and ${F_n(B') \leq F_n(B)}$ for any ${B' \subseteq B \subseteq \spS}$.} since conditional entropy is monotone (which is also called the ``information never hurts'' principle).

Let $\Delta_n(\vx \mid B) \defeq \Delta_n(\{\vx\} \mid B) \defeq F_n(B \cup \{\vx\}) - F(B)$ denote the \emph{marginal gain} of $\vx \in \spS$ given $B \subseteq \spS$ which simplifies to \begin{align*}
  \Delta_n(\vx \mid B) &= \I{\vfsub{\spA}}{\vysub{B}, y_{\vx}}[\spD_{n-1}] - \I{\vfsub{\spA}}{\vysub{B}}[\spD_{n-1}] \\
  &= \H{\vfsub{\spA} \mid \spD_{n-1}, \vysub{B}} - \H{\vfsub{\spA} \mid \spD_{n-1}, \vysub{B}, y_{\vx}} \\
  &= \I{\vfsub{\spA}}{y_{\vx} \mid \spD_{n-1}, \vysub{B}}
\end{align*} and which is precisely the objective function of \itl from \cref{eq:greedy_batch_selection}.

\paragraph{Batch selection via conditional embeddings approximates $B_{n,k}$}

Building upon the theory of maximizing monotone submodular functions \citep{nemhauser1978analysis,krause2014submodular}, \cite{das2018approximate} study greedy maximization under ``approximate'' submodularity:

\begin{definition}[Submodularity ratio of \itl]
  \cite{das2018approximate} define the \emph{submodularity ratio} of $F_n$ up to cardinality $k \geq 1$ as \begin{align}
    \kappa_n(k) \defeq \min_{\substack{B \subseteq B'_{n,k} \\ X \subseteq \spS : |X| \leq k \\ B \cap X = \emptyset}} \frac{\sum_{\vx \in X} \Delta_n(\vx \mid B)}{\Delta_n(X \mid B)}, \label{eq:submodularity_ratio}
  \end{align} where they define $\frac{0}{0} \equiv 1$.
\end{definition}

As a special case of theorem 6 from \cite{das2018approximate}, applying that $F_n$ is non-negative and monotone, we obtain the following result.

\begin{theorem}[Efficiency of batch selection via conditional embeddings]
  For any $n, k \geq 1$, the greedy solution $B'_{n,k}$ provides a $\parentheses*{1-e^{-\kappa_n(k)}}$-approximation of $B_{n,k}$.
\end{theorem}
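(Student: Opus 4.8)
The plan is to check the two structural hypotheses required by Theorem~6 of \cite{das2018approximate} --- non-negativity and monotonicity of the set function $F_n$ --- and then reproduce the short greedy argument specialized to the \itl objective (equivalently, to invoke their theorem as a black box). First I would record that $F_n(B) = \Ism{\vfsub{\spA}}{\vysub{B}}[\spD_{n-1}] \geq 0$ because (conditional) mutual information is non-negative, and that $F_n$ is monotone, i.e. $F_n(B') \leq F_n(B)$ for $B' \subseteq B$, since $\H{\vfsub{\spA}}[\spD_{n-1}, \vysub{B}]$ is non-increasing as the conditioning set grows by the ``information never hurts'' principle. In particular $F_n(\emptyset) = 0$. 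Both facts are already noted in the discussion preceding the statement.

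Next I would set up the standard analysis of greedy maximization under approximate submodularity. Write $B_i = \vx_{n,1:i}$ for the first $i$ greedy picks selected by \cref{eq:greedy_batch_selection} (so $B_0 = \emptyset$ and $B_k = B'_{n,k}$), and let $O = B_{n,k}$ be an optimal batch with $\abs{O} \leq k$. Using monotonicity followed by the definition of the submodularity ratio in \cref{eq:submodularity_ratio} applied to the pair $(B_i, O)$, I obtain
\[
  F_n(O) - F_n(B_i) \leq \Delta_n(O \mid B_i) \leq \frac{1}{\kappa_n(k)} \sum_{\vx \in O} \Delta_n(\vx \mid B_i).
\]
Because the greedy rule picks the single point of largest marginal gain and $\abs{O} \leq k$, every summand is at most $F_n(B_{i+1}) - F_n(B_i)$, so that $\sum_{\vx \in O} \Delta_n(\vx \mid B_i) \leq k\parentheses*{F_n(B_{i+1}) - F_n(B_i)}$.

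Combining these into a one-step contraction for the gap $\delta_i \defeq F_n(O) - F_n(B_i)$ gives $\delta_{i+1} \leq \parentheses*{1 - \kappa_n(k)/k}\,\delta_i$. Iterating over $i = 0, \dots, k-1$ and using $\parentheses*{1 - \kappa_n(k)/k}^{k} \leq e^{-\kappa_n(k)}$ together with $\delta_0 = F_n(B_{n,k})$, I arrive at $\delta_k \leq e^{-\kappa_n(k)} F_n(B_{n,k})$, whence $F_n(B'_{n,k}) = F_n(B_{n,k}) - \delta_k \geq \parentheses*{1 - e^{-\kappa_n(k)}} F_n(B_{n,k})$, which is the claim.

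The main subtlety --- the one point that makes this more than a verbatim citation --- is to ensure that the submodularity ratio is invoked only on the sets the greedy trajectory actually touches. The ratio in \cref{eq:submodularity_ratio} is \emph{localized}: there $B$ ranges over subsets of the greedy solution $B'_{n,k}$ and $X$ over candidate sets of size at most $k$. The argument above respects exactly this restriction, since it uses the ratio bound with $B = B_i \subseteq B'_{n,k}$ and $X = O$, $\abs{O} \leq k$. Hence the localized --- and therefore larger, and thus more favorable --- constant $\kappa_n(k)$ is legitimate, which is precisely why Theorem~6 of \cite{das2018approximate} delivers the stated guarantee rather than a weaker one based on the global submodularity ratio.
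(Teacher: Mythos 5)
Your proposal is correct and follows essentially the same route as the paper, which likewise verifies only that $F_n$ is non-negative and monotone and then invokes Theorem~6 of \cite{das2018approximate} as a black box; your explicit greedy contraction argument is exactly the standard proof underlying that citation. The one cosmetic point is that the submodularity ratio in \cref{eq:submodularity_ratio} requires $B \cap X = \emptyset$, so it should be applied with $X = O \setminus B_i$ rather than $X = O$ (using $\Delta_n(O \mid B_i) = \Delta_n(O \setminus B_i \mid B_i)$ and $\abs{O \setminus B_i} \leq k$), which changes nothing downstream.
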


If $\spS \subseteq \spA$, it is well known \citep[e.g., ][]{srinivas2009gaussian} that $F_n$ is submodular, which implies that $\kappa_n(k) \geq 1$ for all $k \geq 1$.

\section{Proofs}\label{sec:proofs}

\subsection{Definitions}

We write \begin{itemize}
  \item $\sigma^2 \defeq \max_{\vx \in \spX} \sigma_{0}^2(\vx)$, and
  \item $\tilde{\sigma}^2 \defeq \max_{\vx \in \spX} \sigma_{0}^2(\vx) + \rho^2(\vx)$.
\end{itemize}

\subsection{Undirected Case of \itl}\label{sec:proofs:undirected_itl}

We briefly examine the important special case of \itl where $\spS \subseteq \spA$.
In this setting, for all $\vx \in \spS$, the decision rule of \itl simplifies to \begin{align*}
  \Ism{\vfsub{\spA}}{y_{\vx}}[\spD_n] \eeq{1} \Ism{\vfsub{\spA \setminus \{\vx\}}}{y_{\vx}}[f_{\vx}, \spD_n] + \I{f_{\vx}}{y_{\vx}}[\spD_n] \\
  \eeq{2} \Ism{f_{\vx}}{y_{\vx}}[\spD_n] \\
  &= \Hsm{y_{\vx} \mid \spD_n} - \Hsm{\varepsilon_{\vx}}
\end{align*} where \e{1} follows from the chain rule of information gain and $\vx \in \spS \subseteq \spA$; and \e{2} follows from the conditional independence $\vfsub{\spA} \perp y_{\vx} \mid f_{\vx}$.

If additionally $f$ is a GP then \begin{align*}
    \Hsm{y_{\vx} \mid \spD_n} - \H{\varepsilon_{\vx}} = \frac{1}{2} \log \parentheses*{1 + \frac{\Var{f_{\vx} \mid \spD_n}}{\Var{\varepsilon_{\vx}}}}.
\end{align*}
Therefore, when $\spS \subseteq \spA$ and observation noise is homoscedastic, \itl is equivalent to uncertainty sampling.

\subsection{Proof of \cref{thm:variance_convergence}}\label{sec:proofs:variance_convergence}

We will now prove \cref{thm:variance_convergence} where we assume throughout that $\spS \subseteq \spA$.
Let \begin{align}
  \Gamma_n \defeq \max_{\vx \in \spS} \I{\vfsub{\spA}}{y_{\vx}}[\spD_n]
\end{align} denote the objective value of \itl during round $n+1$.

Before proving the convergence outside $\spS$ in \cref{sec:proofs:variance_convergence:outside}, we first prove the convergence of the ``step-wise uncertainty reduction'' $\Gamma_n$.

\begin{theorem}[Bound of uncertainty reduction for \itl]\label{thm:objective_convergence}
  For any $n \geq 1$, if \itl generated the sequence $\{\vx_{i}\}_{i = 1}^{n}$ then \begin{align}
    \Gamma_{n-1} \leq \frac{\gamma_n}{n}.
  \end{align}
\end{theorem}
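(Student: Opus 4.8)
The plan is to control the single-round objective $\Gamma_{n-1}$ by averaging the cumulative information gain ITL collects over the first $n$ rounds, and then to bound that cumulative gain by the information capacity $\gamma_n$. Two facts do the work: the chain rule of information gain, which writes the total information acquired about $\vfsub{\spA}$ as a sum of the per-round objective values; and a diminishing-returns (monotonicity) property, which lets me replace the smallest summand by the average.

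First I would apply the chain rule of information gain to the full sequence $y_{1:n} = (y_1, \dots, y_n)$ of observations selected by ITL, obtaining
\[
  \I{\vfsub{\spA}}{y_{1:n}} = \sum_{i=1}^{n} \I{\vfsub{\spA}}{y_i}[\spD_{i-1}].
\]
Since ITL chooses $\vx_i$ to maximize $\I{\vfsub{\spA}}{y_{\vx}}[\spD_{i-1}]$ over $\vx \in \spS$, each summand is exactly $\Gamma_{i-1}$ by definition, so $\I{\vfsub{\spA}}{y_{1:n}} = \sum_{i=1}^{n} \Gamma_{i-1}$.

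Next I would show that $\{\Gamma_m\}$ is non-increasing. Because $\spS \subseteq \spA$, the information gain $B \mapsto \I{\vfsub{\spA}}{\vysub{B}}$ is submodular over $\spS$ \citep{srinivas2009gaussian}, i.e.\ its marginal gains do not increase as the conditioning set grows. Applied to the nested observation sets $\spD_{n-1} \subseteq \spD_n$, this gives $\I{\vfsub{\spA}}{y_{\vx}}[\spD_n] \leq \I{\vfsub{\spA}}{y_{\vx}}[\spD_{n-1}]$ for every $\vx \in \spS$, and taking the maximum over $\vx$ on both sides yields $\Gamma_n \leq \Gamma_{n-1}$. Hence $\Gamma_{n-1} \leq \Gamma_{i-1}$ for all $i \leq n$, and summing gives $n \Gamma_{n-1} \leq \sum_{i=1}^{n} \Gamma_{i-1} = \I{\vfsub{\spA}}{y_{1:n}}$. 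Finally I would bound the right-hand side by $\gamma_n$: writing $X = \{\vx_1, \dots, \vx_n\} \subseteq \spS \subseteq \spA$ for the (multi)set of sampled points, noise independence makes $\vysub{X}$ conditionally independent of $\vfsub{\spA \setminus X}$ given $\vfsub{X}$, so the chain rule gives
\[
  \I{\vfsub{\spA}}{\vysub{X}} = \I{\vfsub{X}}{\vysub{X}} + \I{\vfsub{\spA \setminus X}}{\vysub{X}}[\vfsub{X}] = \I{\vfsub{X}}{\vysub{X}} \leq \gamma_n,
\]
where the conditional term vanishes and the last inequality is the definition of $\gamma_n$. Combining the two displays yields $n \Gamma_{n-1} \leq \gamma_n$, i.e.\ $\Gamma_{n-1} \leq \gamma_n/n$.

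The step I expect to require the most care is the monotonicity of $\Gamma$. In general, conditioning on additional observations can \emph{increase} mutual information, so the diminishing-returns inequality is not automatic; it is precisely the assumption $\spS \subseteq \spA$, which guarantees submodularity of the information-gain objective, that makes it hold. The delicate point is to verify that the cited submodularity translates cleanly into the per-point inequality $\I{\vfsub{\spA}}{y_{\vx}}[\spD_n] \leq \I{\vfsub{\spA}}{y_{\vx}}[\spD_{n-1}]$ \emph{uniformly} over $\vx \in \spS$ before taking the maximum.
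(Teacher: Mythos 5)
Your proposal is correct and follows essentially the same route as the paper's proof: bound $\Gamma_{n-1}$ by the average of the non-increasing sequence $\Gamma_0,\dots,\Gamma_{n-1}$ (monotonicity justified by submodularity under $\spS \subseteq \spA$, which is the same conditional-independence fact the paper invokes), telescope via the chain rule into $\I{\vfsub{\spA}}{\vysub{\vx_{1:n}}}$, and reduce to $\gamma_n$ using $\vysub{X} \perp \vfsub{\spA} \mid \vfsub{X}$. The only point you gloss over, which the paper handles explicitly, is that conditioning on the realized data $\spD_{i-1}$ and on the random variables $\vysub{\vx_{1:i-1}}$ give the same information-gain values because Gaussian posterior covariances depend only on the observation locations, not their realizations.
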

\begin{proof}
  We have \begin{align*}
    \Gamma_{n-1} &= \frac{1}{n} \sum_{i=0}^{n-1} \Gamma_{n-1} \\
    \eleq{1} \frac{1}{n} \sum_{i=0}^{n-1} \Gamma_i \\
    \eeq{2} \frac{1}{n} \sum_{i=0}^{n-1} \Ism{\vfsub{\spA}}{y_{\vx_{i+1}}}[\spD_{i}] \\
    \eeq{3} \frac{1}{n} \sum_{i=0}^{n-1} \Ism{\vfsub{\spA}}{y_{\vx_{i+1}}}[\vy_{\vx_{1:i}}] \\
    \eeq{4} \frac{1}{n} \Ism{\vfsub{\spA}}{\vysub{\vx_{1:n}}} \\
    &\leq \frac{1}{n} \max_{\substack{X \subseteq \spS \\ |X| = n}} \Ism{\vfsub{\spA}}{\vysub{X}} \\
    \eleq{5} \frac{1}{n} \max_{\substack{X \subseteq \spS \\ |X| = n}} \Ism{\vfsub{X}}{\vysub{X}} \\
    &= \frac{\gamma_n}{n}
  \end{align*} where \e{1} follows from the monotonicity of conditional information gain since $\spS \subseteq \spA$ which implies $\vysub{\vx_{1:i}} \perp y_{\vx} \mid \vfsub{\spA}$ for all $\vx \in \spS$; \e{2} uses the objective of \itl; \e{3} uses that the posterior variance of Gaussians is independent of the realization and only depends on the \emph{location} of observations; \e{4} uses the chain rule of information gain; \e{5} uses ${\vysub{X} \perp \vfsub{\spA} \mid \vfsub{X}}$ and the data processing inequality.
  The conditional independences follow from the assumption that the observation noise is independent.
\end{proof}

We prove a simple consequence of \cref{thm:objective_convergence} for points $\vx$ that are \emph{both} in the target space $\spA$ and in the sample space $\spS$.

\begin{lemma}[Uniform bound of marginal variance within $\spS$]\label{lem:convergence_within_S}
  For any $n \geq 0$ and $\vx \in \spA \cap \spS$, \begin{align}
    \sigma_{n}^2(\vx) \leq 2 \tilde{\sigma}^2 \cdot \Gamma_n.
  \end{align}
\end{lemma}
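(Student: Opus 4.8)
The plan is to reduce the claim to an elementary scalar inequality relating $\log(1+x)$ to $x$, by exploiting the closed form of the \itl objective in the undirected regime. Since the proof of \cref{thm:variance_convergence} assumes $\spS \subseteq \spA$ throughout, we have $\spA \cap \spS = \spS$, so every $\vx \in \spA \cap \spS$ is in fact a point of $\spS$ and the computation from \cref{sec:proofs:undirected_itl} applies verbatim. First I would recall that for such $\vx$, $\I{\vfsub{\spA}}{y_{\vx}}[\spD_n] = \frac{1}{2}\log\parentheses*{1 + \sigma_n^2(\vx)/\rho^2(\vx)}$. Because $\Gamma_n$ is by definition the maximum of this objective over all $\vx \in \spS$, it dominates the objective evaluated at each individual $\vx \in \spA \cap \spS$, yielding $\Gamma_n \geq \frac{1}{2}\log\parentheses*{1 + \sigma_n^2(\vx)/\rho^2(\vx)}$.

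Next I would convert this logarithmic lower bound on $\Gamma_n$ into the desired linear bound on $\sigma_n^2(\vx)$. The key elementary fact is $\log(1+x) \geq x/(1+x)$ for all $x \geq 0$, which I would apply with $x = \sigma_n^2(\vx)/\rho^2(\vx)$ to obtain $2 \Gamma_n \geq \sigma_n^2(\vx)/\parentheses*{\rho^2(\vx) + \sigma_n^2(\vx)}$. Multiplying both sides by the (positive) denominator gives $2 \Gamma_n \parentheses*{\rho^2(\vx) + \sigma_n^2(\vx)} \geq \sigma_n^2(\vx)$, so it now suffices to replace the factor $\rho^2(\vx) + \sigma_n^2(\vx)$ by the uniform constant $\tilde{\sigma}^2$.

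The final ingredient is the bound $\rho^2(\vx) + \sigma_n^2(\vx) \leq \tilde{\sigma}^2$, which follows from the monotonicity of the GP posterior variance, $\sigma_n^2(\vx) \leq \sigma_0^2(\vx)$, together with the definition $\tilde{\sigma}^2 = \max_{\vx \in \spX} \sigma_0^2(\vx) + \rho^2(\vx)$. Since $\Gamma_n \geq 0$ (it is a maximum of nonnegative information gains), this lets me enlarge the right-hand side: $\sigma_n^2(\vx) \leq 2 \Gamma_n \parentheses*{\rho^2(\vx) + \sigma_n^2(\vx)} \leq 2 \tilde{\sigma}^2 \, \Gamma_n$, which is exactly the claim. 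I do not expect a genuine obstacle here; the only point requiring care is selecting the tightening inequality $\log(1+x) \geq x/(1+x)$ (rather than a cruder bound) so that the constant composes to give precisely $2\tilde{\sigma}^2$, and confirming that it is $\rho^2(\vx)+\sigma_n^2(\vx)$, and not $\rho^2(\vx)$ alone, that must be controlled by $\tilde{\sigma}^2$.
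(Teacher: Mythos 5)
Your proposal is correct and follows essentially the same route as the paper: the paper writes $\sigma_n^2(\vx) = \Var{y_{\vx} \mid \spD_n} - \Varsm{y_{\vx} \mid f_{\vx}, \spD_n} \leq \tilde{\sigma}^2 \log\parentheses*{\Var{y_{\vx} \mid \spD_n}/\Varsm{y_{\vx} \mid f_{\vx}, \spD_n}} = 2\tilde{\sigma}^2\,\Ism{f_{\vx}}{y_{\vx}}[\spD_n] \leq 2\tilde{\sigma}^2\,\Gamma_n$, and your inequality $\log(1+x) \geq x/(1+x)$ is exactly the content of its \cref{lem:difference_bound_by_log} (with $b = \sigma_n^2(\vx)+\rho^2(\vx)$, $a = \rho^2(\vx)$, $M = \tilde{\sigma}^2$). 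The only cosmetic difference is that you run the chain from $\Gamma_n$ down to $\sigma_n^2(\vx)$ using the closed-form undirected objective, whereas the paper runs it upward via monotonicity of information gain.
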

\begin{proof}
  We have \begin{align*}
    \sigma_{n}^2(\vx) &= \Var{f_{\vx} \mid \spD_n} - \underbrace{\Varsm{f_{\vx} \mid f_{\vx}, \spD_n}}_{0} \\
    \eeq{1} \begin{multlined}[t]
      \Var{y_{\vx} \mid \spD_n} - \rho^2(\vx) \\ - (\Varsm{y_{\vx} \mid f_{\vx}, \spD_n} - \rho^2(\vx))
    \end{multlined} \\
    &= \Var{y_{\vx} \mid \spD_n} - \Varsm{y_{\vx} \mid f_{\vx}, \spD_n} \\
    \eleq{2} \tilde{\sigma}^2 \log\parentheses*{\frac{\Var{y_{\vx} \mid \spD_n}}{\Varsm{y_{\vx} \mid f_{\vx}, \spD_n}}} \\
    &= 2 \tilde{\sigma}^2 \cdot \Ism{f_{\vx}}{y_{\vx}}[\spD_n] \\
    \eleq{3} 2 \tilde{\sigma}^2 \cdot \Ism{\vfsub{\spA}}{y_{\vx}}[\spD_n] \\
    \eleq{4} 2 \tilde{\sigma}^2 \cdot \max_{\vxp \in \spS} \Ism{\vfsub{\spA}}{y_{\vxp}}[\spD_n] \\
    &= 2 \tilde{\sigma}^2 \cdot \Gamma_n
  \end{align*} where \e{1} follows from the noise assumption (cf.~\cref{asm:bayesian_noise}); \e{2} follows from \cref{lem:difference_bound_by_log} and using monotonicity of variance; \e{3} follows from $\vx \in \spA$ and monotonicity of information gain; and \e{4} follows from ${\vx \in \spS}$.
\end{proof}

\subsubsection{Convergence outside $\spS$}\label{sec:proofs:variance_convergence:outside}

We will now show convergence of marginal variance to the irreducible uncertainty for points outside the sample space.

Our proof roughly proceeds as follows:
We construct an ``approximate Markov boundary'' of $\vx$ in $\spS$, and show (1) that the size of this Markov boundary is independent of $n$, and (2) that a small uncertainty reduction within the Markov boundary implies that the marginal variances at the Markov boundary and(!) $\vx$ are small.

\begin{definition}[Approximate Markov boundary]\label{defn:approx_markov_boundary}
  For any $\epsilon > 0$, $n \geq 0$, and $\vx \in \spX$, we denote by $B_{n,\epsilon}(\vx)$ the smallest (multi-)subset of $\spS$ such that \begin{align}
    \Varsm{f_{\vx} \mid \spD_n, \vysub{B_{n,\epsilon}(\vx)}} &\leq \eta_{\spS}^2(\vx) + \epsilon. \label{eq:approx_markov_boundary}
  \end{align}
  We call $B_{n,\epsilon}(\vx)$ an \emph{$\epsilon$-approximate Markov boundary} of $\vx$ in~$\spS$.
\end{definition}

\Cref{eq:approx_markov_boundary} is akin to the notion of the smallest Markov blanket in $\spS$ of some $\vx \in \spX$ (called a \emph{Markov boundary}) which is the smallest set $\spB \subseteq \spS$ such that $f_{\vx} \perp \vfsub{\spS} \mid \vfsub{\spB}$.

\begin{lemma}[Existence of an approximate Markov boundary]\label{lem:approx_markov_boundary}
  For any $\epsilon > 0$, let $k$ be the smallest integer satisfying \begin{align}
    \frac{\gamma_k}{k} \leq \frac{\epsilon \lambda_{\min}^2(\Var{\vfsub{\spS}})}{2 \abs{\spS}^2 \sigma^4 \tilde{\sigma}^2}. \label{eq:markov_boundary_size_condition}
  \end{align}
  Then, for any $n \geq 0$ and $\vx \in \spX$, there exists an $\epsilon$-approximate Markov boundary $B_{n,\epsilon}(\vx)$ of $\vx$ in $\spS$ with size at most $k$.
\end{lemma}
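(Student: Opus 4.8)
The plan is to construct the candidate set $B$ explicitly as the multiset produced by running greedy uncertainty reduction over $\spS$ --- that is, \itl with target space $\spA = \spS$ and sample space $\spS$ --- for $k$ rounds, initialized with the already-collected data $\spD_n$. Write $B_k = \{\vxp_1,\dots,\vxp_k\}$ for the result, so $\abs{B_k} = k$. The guiding intuition is that if this rule has driven the posterior variance of \emph{every} component $f_\vu$, $\vu \in \spS$, toward zero, then $\vysub{B_k}$ pins down $\vfsub{\spS}$ almost exactly, and so the posterior variance of $f_\vx$ must be close to the value $\eta_\spS^2(\vx)$ it would take under exact knowledge of $\vfsub{\spS}$.

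First I would make this quantitative via a variance decomposition. Conditioning additionally on $\vfsub{\spS}$ and applying the law of total variance together with the conditional independence $f_\vx \perp (\spD_n, \vysub{B_k}) \mid \vfsub{\spS}$ (the data are noisy observations of components of $\vfsub{\spS}$ with independent noise), Gaussian linearity of $\E{f_\vx \mid \vfsub{\spS}}$ gives
\[
    \Var{f_\vx \mid \spD_n, \vysub{B_k}} = \eta_\spS^2(\vx) + \transpose{\va}\,\Var{\vfsub{\spS} \mid \spD_n, \vysub{B_k}}\,\va,
\]
where $\va \defeq \inv{\mM}\vk_\vx$ is the prior regression coefficient of $f_\vx$ on $\vfsub{\spS}$, with $\mM \defeq \Var{\vfsub{\spS}}$ and $\vk_\vx \defeq (k(\vu,\vx))_{\vu\in\spS}$. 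Writing $\mM_{B_k} \defeq \Var{\vfsub{\spS} \mid \spD_n, \vysub{B_k}}$, the excess is $\transpose{\va}\mM_{B_k}\va \leq \norm{\va}^2\,\tr{\mM_{B_k}}$. Here $\norm{\va}^2 \leq \lambda_{\min}^{-2}(\mM)\,\norm{\vk_\vx}^2 \leq \lambda_{\min}^{-2}(\mM)\,\abs{\spS}\sigma^4$, using $\norm{\inv{\mM}}_{\mathrm{op}} = \lambda_{\min}^{-1}(\mM)$ and the kernel Cauchy--Schwarz bound $k(\vu,\vx)^2 \leq \sigma^4$, while $\tr{\mM_{B_k}} = \sum_{\vu\in\spS}\Var{f_\vu \mid \spD_n, \vysub{B_k}}$.

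It then remains to control each component variance. Applying \cref{lem:convergence_within_S} to the run with $\spA = \spS$ gives $\Var{f_\vu \mid \spD_n, \vysub{B_k}} \leq 2\tilde\sigma^2\,\Gamma^{(k)}$ for all $\vu \in \spS$, where $\Gamma^{(k)} \defeq \max_{\vu\in\spS}\Ism{\vfsub{\spS}}{y_\vu}[\spD_n, \vysub{B_k}]$ is the \itl objective value. Repeating the averaging argument of \cref{thm:objective_convergence} along the greedy path --- the objective is non-increasing because $\spS \subseteq \spS$, and the chain rule telescopes the greedy gains into $\Ism{\vfsub{\spS}}{\vysub{B_k}}[\spD_n]$ --- yields $\Gamma^{(k)} \leq \gamma_k/k$. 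The one point needing care is that the procedure starts from the nonempty data $\spD_n$; I would handle this with the data-processing inequality $\Ism{\vfsub{\spS}}{\vysub{B_k}}[\spD_n] \leq \Ism{\vfsub{B_k}}{\vysub{B_k}}[\spD_n]$ followed by $\Var{\vfsub{B_k} \mid \spD_n} \preceq \Var{\vfsub{B_k}}$, which keeps the bound at $\gamma_k$ rather than $\gamma_{n+k}$. Substituting $\tr{\mM_{B_k}} \leq 2\abs{\spS}\tilde\sigma^2\gamma_k/k$ into the above gives
\[
    \Var{f_\vx \mid \spD_n, \vysub{B_k}} - \eta_\spS^2(\vx) \;\leq\; \frac{2\abs{\spS}^2\sigma^4\tilde\sigma^2}{\lambda_{\min}^2(\mM)}\cdot\frac{\gamma_k}{k},
\]
which is at most $\epsilon$ precisely by the defining condition \cref{eq:markov_boundary_size_condition} on $k$; hence $B_k$ is an $\epsilon$-approximate Markov boundary of size at most $k$.

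The main obstacle I anticipate is the variance decomposition and the attendant constant bookkeeping: relating the scalar excess variance at the single extrapolation point $\vx$ back to the \emph{vector} of component variances over $\spS$ is exactly where the factors $\lambda_{\min}^2(\mM)$, $\abs{\spS}^2$, and $\sigma^4$ enter, and it is essential that $\mM$ be positive definite so that $\va$ is well defined. The secondary subtlety is the $\gamma_k$-versus-$\gamma_{n+k}$ accounting for the warm start from $\spD_n$, which the data-processing step resolves.
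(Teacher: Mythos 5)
Your proposal is correct and follows essentially the same route as the paper: greedy undirected \itl over $\spS$ drives every component variance on $\spS$ down to $2\tilde{\sigma}^2\gamma_k/k$ (\cref{lem:learn_markov_boundary}), and the law of total variance together with the linearity of $\Esm{f_{\vx}}[\vfsub{\spS}]$ transfers this to the extrapolation point with exactly the constants $\abs{\spS}^2\sigma^4/\lambda_{\min}^2(\Var{\vfsub{\spS}})$ (\cref{lem:approx_markov_boundary_property}); your trace bound on the quadratic form and the paper's $\mA \preceq n\mD$ bound (\cref{lem:qf_upper_bound_}) yield the same final constant. The only deviation is that you warm-start the greedy construction from $\spD_n$ and therefore need the extra data-processing/Loewner-monotonicity step to keep the information bound at $\gamma_k$ rather than $\gamma_{n+k}$, whereas the paper builds $B$ from the prior and inserts $\spD_n$ only at the very end via monotonicity of variance --- both handlings are valid.
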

\Cref{lem:approx_markov_boundary} shows that for any $\epsilon > 0$ there exists a universal constant $b_\epsilon$ such that \begin{align}
  \abs{B_{n,\epsilon}(\vx)} \leq b_\epsilon \qquad \forall n \geq 0, \vx \in \spX.
\end{align}
We defer the proof of \cref{lem:approx_markov_boundary} to \cref{sec:missing_proofs:existence_of_an_approximate_markov_boundary} where we also provide an algorithm to compute~$B_{n,\epsilon}(\vx)$.

\begin{lemma}\label{lem:convergence_helper1}
  For any $\epsilon > 0$, $n \geq 0$, and $\vx \in \spX$, \begin{align}
    \sigma_{n}^2(\vx) \leq 2 \sigma^2 \cdot \Ism{f_{\vx}}{\vysub{B_{n,\epsilon}(\vx)}}[\spD_n] + \eta_{\spS}^2(\vx) + \epsilon
  \end{align} where $B_{n,\epsilon}(\vx)$ is an $\epsilon$-approximate Markov boundary of $\vx$ in $\spS$.
\end{lemma}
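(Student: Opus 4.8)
The plan is to mirror the proof of \cref{lem:convergence_within_S}, but now relating the \emph{marginal} variance reduction at $\vx$ to the information gained about $f_{\vx}$ from its approximate Markov boundary. Write $B \defeq B_{n,\epsilon}(\vx)$ for brevity. Since $f$ is a GP, both $f_{\vx} \mid \spD_n$ and $f_{\vx} \mid \spD_n, \vysub{B}$ are Gaussian, so the information gain admits the closed form
\[ \Ism{f_{\vx}}{\vysub{B}}[\spD_n] = \frac{1}{2} \log\parentheses*{\frac{\Var{f_{\vx} \mid \spD_n}}{\Varsm{f_{\vx} \mid \spD_n, \vysub{B}}}}, \]
which follows from the symmetry of mutual information together with the scalar-Gaussian entropy formula, exactly as in the $y_{\vx}$-based identity used in \cref{lem:convergence_within_S}.

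First I would control the \emph{reducible} part. Setting $a \defeq \sigma_{n}^2(\vx) = \Var{f_{\vx} \mid \spD_n}$ and $b \defeq \Varsm{f_{\vx} \mid \spD_n, \vysub{B}}$, monotonicity of variance under conditioning gives $0 < b \leq a$, and the prior-variance bound $a \leq \sigma_0^2(\vx) \leq \sigma^2$ holds by the same monotonicity. Applying \cref{lem:difference_bound_by_log} (the inequality $a - b \leq M \log(a/b)$ valid for $0 < b \leq a \leq M$, here with $M = \sigma^2$) then yields
\[ a - b \leq \sigma^2 \log\parentheses*{\frac{a}{b}} = 2\sigma^2 \cdot \Ism{f_{\vx}}{\vysub{B}}[\spD_n], \]
using the closed form above in the last equality.

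Second I would invoke the defining property of the $\epsilon$-approximate Markov boundary, \cref{eq:approx_markov_boundary} in \cref{defn:approx_markov_boundary}, which states $b = \Varsm{f_{\vx} \mid \spD_n, \vysub{B}} \leq \eta_{\spS}^2(\vx) + \epsilon$. Combining the two steps via $a = (a-b) + b$ gives exactly
\[ \sigma_{n}^2(\vx) \leq 2\sigma^2 \cdot \Ism{f_{\vx}}{\vysub{B_{n,\epsilon}(\vx)}}[\spD_n] + \eta_{\spS}^2(\vx) + \epsilon. \]

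The only subtle point — and where I would be careful — is getting the right constant in the difference-to-log step: because this bound concerns the latent $f_{\vx}$ rather than the noisy observation $y_{\vx}$, the relevant ceiling on the posterior variance is the prior variance $\sigma^2$ and not $\tilde{\sigma}^2$, which is precisely why the factor is $2\sigma^2$ here whereas \cref{lem:convergence_within_S} carries $2\tilde{\sigma}^2$. Everything else is a routine application of the Gaussian information-gain identity and monotonicity of conditional variance; in particular no bound on $\abs{B}$ is needed for this lemma, as the size control is supplied separately by \cref{lem:approx_markov_boundary}.
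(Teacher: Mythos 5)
Your proof is correct and matches the paper's argument essentially step for step: both decompose $\sigma_n^2(\vx)$ into the variance gap $\Var{f_{\vx} \mid \spD_n} - \Varsm{f_{\vx} \mid \spD_n, \vysub{B_{n,\epsilon}(\vx)}}$ plus the remainder controlled by \cref{eq:approx_markov_boundary}, and both bound the gap via \cref{lem:difference_bound_by_log} with ceiling $M = \sigma^2$ together with the Gaussian information-gain identity. Your closing remark about why the constant is $2\sigma^2$ here rather than $2\tilde{\sigma}^2$ (latent $f_{\vx}$ versus noisy $y_{\vx}$) is exactly the right distinction.
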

\begin{proof}
  We have \begin{align*}
    \sigma_{n}^2(\vx) &= \begin{multlined}[t]
      \Var{f_{\vx} \mid \spD_n} - \eta_{\spS}^2(\vx) + \eta_{\spS}^2(\vx)
    \end{multlined} \\
    \eleq{1} \begin{multlined}[t]
      \Var{f_{\vx} \mid \spD_n} - \Varsm{f_{\vx} \mid \vysub{B_{n,\epsilon}(\vx)}, \spD_n} \\ + \eta_{\spS}^2(\vx) + \epsilon
    \end{multlined} \\
    \eleq{2} \begin{multlined}[t]
      \sigma^2 \log\parentheses*{\frac{\Var{f_{\vx} \mid \spD_n}}{\Varsm{f_{\vx} \mid \vysub{B_{n,\epsilon}(\vx)}, \spD_n}}} \\ + \eta_{\spS}^2(\vx) + \epsilon
    \end{multlined} \\
    &= 2 \sigma^2 \cdot \Ism{f_{\vx}}{\vysub{B_{n,\epsilon}(\vx)}}[\spD_n] + \eta_{\spS}^2(\vx) + \epsilon
  \end{align*} where \e{1} follows from the defining property of an $\epsilon$-approximate Markov boundary (cf.~\cref{eq:approx_markov_boundary}); and \e{2} follows from \cref{lem:difference_bound_by_log} and using monotonicity of variance.
\end{proof}

\begin{lemma}\label{lem:convergence_helper2}
  For any $\epsilon > 0, n \geq 0$, and $\vx \in \spA$, \begin{align}
    \Ism{f_{\vx}}{\vysub{B_{n,\epsilon}(\vx)}}[\spD_n] \leq b_{\epsilon} \Gamma_n
  \end{align} where $B_{n,\epsilon}(\vx)$ is an $\epsilon$-approximate Markov boundary of $\vx$ in $\spS$ with $\abs{B_{n,\epsilon}(\vx)} \leq b_\epsilon$.
\end{lemma}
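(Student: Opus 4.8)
The plan is to bound the information that the boundary observations carry about the single marginal $f_{\vx}$ by the information they carry about the \emph{entire} target vector $\vfsub{\spA}$, and then to unpack the latter into at most $b_\epsilon$ marginal gains, each of which is controlled by the step-wise reduction $\Gamma_n$. First, since $\vx \in \spA$, the marginal $f_{\vx}$ is simply a coordinate of $\vfsub{\spA}$, so conditionally on $\spD_n$ the observations $\vysub{B_{n,\epsilon}(\vx)}$, the vector $\vfsub{\spA}$, and the scalar $f_{\vx}$ form a Markov chain. The data processing inequality (equivalently, the monotonicity of information gain when we enlarge the quantity being learned about) then gives
\begin{align*}
  \Ism{f_{\vx}}{\vysub{B_{n,\epsilon}(\vx)}}[\spD_n] \leq \Ism{\vfsub{\spA}}{\vysub{B_{n,\epsilon}(\vx)}}[\spD_n].
\end{align*}

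Next, enumerate the boundary as $B_{n,\epsilon}(\vx) = \{\vxp_1, \dots, \vxp_m\}$ with $m = \abs{B_{n,\epsilon}(\vx)} \leq b_\epsilon$, where the uniform size bound is exactly the content of \cref{lem:approx_markov_boundary}. The chain rule of information gain decomposes the right-hand side above into
\begin{align*}
  \Ism{\vfsub{\spA}}{\vysub{B_{n,\epsilon}(\vx)}}[\spD_n] = \sum_{i=1}^{m} \Ism{\vfsub{\spA}}{y_{\vxp_i}}[\spD_n, \vysub{\vxp_{1:i-1}}].
\end{align*}
Now each summand is a marginal gain of the set function $B \mapsto \Ism{\vfsub{\spA}}{\vysub{B}}[\spD_n]$. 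Because $\spS \subseteq \spA$, this set function is submodular (as recorded in \cref{sec:nonmyopic}), so marginal gains are non-increasing under additional conditioning; dropping the conditioning on $\vysub{\vxp_{1:i-1}}$ and then maximizing over $\spS$ yields
\begin{align*}
  \Ism{\vfsub{\spA}}{y_{\vxp_i}}[\spD_n, \vysub{\vxp_{1:i-1}}] \leq \Ism{\vfsub{\spA}}{y_{\vxp_i}}[\spD_n] \leq \max_{\vxp \in \spS} \Ism{\vfsub{\spA}}{y_{\vxp}}[\spD_n] = \Gamma_n.
\end{align*}
Summing these $m \leq b_\epsilon$ terms and chaining with the data-processing step gives the claimed inequality $\Ism{f_{\vx}}{\vysub{B_{n,\epsilon}(\vx)}}[\spD_n] \leq b_\epsilon \Gamma_n$.

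The main obstacle is not the algebra but the careful justification that the marginal gains are non-increasing, i.e. that submodularity holds here. This relies crucially on $\spS \subseteq \spA$ together with the independent-noise assumption, which supply the conditional independence $\vysub{\vxp_{1:i-1}} \perp y_{\vxp_i} \mid \vfsub{\spA}$ — the same structural fact exploited in the proof of \cref{thm:objective_convergence}. One should also take care that the comparison with $\Gamma_n$ is legitimate: $\Gamma_n$ is defined with conditioning on $\spD_n$ \emph{only}, so it is exactly the decrease in marginal gain under extra conditioning (rather than any change in the reference round) that lets each term be absorbed into $\Gamma_n$. Everything else is a direct application of the chain rule, symmetry of mutual information, and the constant-size guarantee $\abs{B_{n,\epsilon}(\vx)} \leq b_\epsilon$ from \cref{lem:approx_markov_boundary}.
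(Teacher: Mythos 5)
Your proof is correct and follows essentially the same route as the paper's: monotonicity of information gain to pass from $f_{\vx}$ to $\vfsub{\spA}$ (using $\vx \in \spA$), the chain rule to decompose $\Ism{\vfsub{\spA}}{\vysub{B_{n,\epsilon}(\vx)}}[\spD_n]$ into marginal gains, and the conditional independence $\vysub{\vxp_{1:i-1}} \perp y_{\vxp_i} \mid \vfsub{\spA}$ (from $\spS \subseteq \spA$ and independent noise) to drop the extra conditioning and bound each term by $\Gamma_n$. The paper phrases the last step as ``monotonicity of conditional information gain'' rather than submodularity, but the content is identical.
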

\begin{proof}
  We use the abbreviated notation ${B = B_{n,\epsilon}(\vx)}$.
  We have \begin{align*}
    \Ism{f_{\vx}}{\vysub{B}}[\spD_n] \eleq{1} \Ism{\vfsub{\spA}}{\vysub{B}}[\spD_n] \\
    \eleq{2} \sum_{\tilde{\vx} \in B} \Ism{\vfsub{\spA}}{y_{\tilde{\vx}}}[\spD_n] \\
    \eleq{3} b_{\epsilon} \max_{\tilde{\vx} \in B} \Ism{\vfsub{\spA}}{y_{\tilde{\vx}}}[\spD_n] \\
    \eleq{4} b_{\epsilon} \max_{\tilde{\vx} \in \spS} \Ism{\vfsub{\spA}}{y_{\tilde{\vx}}}[\spD_n] \\
    &= b_{\epsilon} \Gamma_n
  \end{align*} where \e{1} follows from monotonicity of mutual information; \e{2} follows from the chain rule of information gain and the monotonicity of conditional information gain which is due to $\spD_n \perp y_{\tilde{\vx}} \mid \vfsub{\spA}$ for all $\tilde{\vx} \in B \subseteq \spS$; \e{3} follows from $b \leq b_\epsilon$; and \e{4} follows from $B \subseteq \spS$.
\end{proof}

\begin{proof}[Proof of \cref{thm:variance_convergence}]
  Fix any ${\vx \in \spA}$ and ${\epsilon > 0}$.
  By \cref{lem:approx_markov_boundary}, there exists an $\epsilon$-approximate Markov boundary $B_{n,\epsilon}(\vx)$ of $\vx$ in $\spS$ such that $\abs{B_{n,\epsilon}(\vx)} \leq b_\epsilon$.
  We have \begin{align*}
    \sigma_{n}^2(\vx) \eleq{1} 2 \sigma^2 \cdot \Ism{f_{\vx}}{\vysub{B_{n,\epsilon}(\vx)}}[\spD_{n}] + \eta_{\spS}^2(\vx) + \epsilon \\
    \eleq{2} 2 \sigma^2 b_{\epsilon} \Gamma_n + \eta_{\spS}^2(\vx) + \epsilon \\
    \eleq{3} 2 \sigma^2 b_{\epsilon} \frac{\gamma_{n+1}}{{n+1}} + \eta_{\spS}^2(\vx) + \epsilon
  \end{align*} where \e{1} follows from \cref{lem:convergence_helper1}; \e{2} follows from \cref{lem:convergence_helper2}; and follows from \cref{thm:objective_convergence} using that \itl generated the sequence $\{\vx_{i}\}_{i = 1}^{n}$.
  The result follows by setting $C_\epsilon = 2 \sigma^2 b_\epsilon$.
\end{proof}

\subsubsection{Exemplary Application of \cref{thm:variance_convergence}}\label{sec:proofs:variance_convergence:application}

Let $\smash{\epsilon = c \frac{\gamma_{\sqrt{n}}}{\sqrt{n}}}$ with ${c = 2 \abs{\spS}^2 \sigma^4 \tilde{\sigma}^2 / \lambda_{\min}^2(\Var{\vfsub{\spS}})}$.
Then, by \cref{eq:markov_boundary_size_condition}, $b_\epsilon$ can be bounded for instance by $\sqrt{n}$.
Together with \cref{thm:objective_convergence} this implies for \itl that \begin{align*}
  \nu_{n,\epsilon}^2 + \epsilon &\leq 2 \sigma^2 \sqrt{n} \, \Gamma_n + c \gamma_{\sqrt{n}} / \sqrt{n} \\
  &\leq c' \gamma_n / \sqrt{n}
\end{align*} for a constant $c'$, e.g., $c' = 2 \sigma^2 + c$.
This guarantees that the reducible uncertainty of \itl converges, e.g., for Gaussian and smooth Matérn kernels.

\subsubsection{Existence of an Approximate Markov Boundary}\label{sec:missing_proofs:existence_of_an_approximate_markov_boundary}

We now derive \cref{lem:approx_markov_boundary} which shows the existence of an approximate Markov boundary of~$\vx$ in~$\spS$.

\begin{lemma}\label{lem:learn_markov_boundary}
  For any $k \geq 0$, there exists ${B \subseteq \spS}$ with ${|B| = k}$ such that for all ${\vxp \in \spS}$, \begin{align}
    \Var{f_{\vxp} \mid \vysub{B}} \leq 2 \tilde{\sigma}^2 \frac{\gamma_k}{k}.
  \end{align}
\end{lemma}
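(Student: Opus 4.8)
The plan is to reduce the claim to the two guarantees already established for \itl in the undirected regime, by instantiating the target space as the sample space itself. Concretely, I would set $\spA := \spS$ (so that $\spS \subseteq \spA$ holds trivially) and run \itl for $k$ rounds, producing points $\vx_1, \dots, \vx_k \in \spS$; the witnessing set is $B := \{\vx_1, \dots, \vx_k\}$, which has $\abs{B} = k$ (allowing $B$ to be a multiset, as is permitted in the GP setting with positive noise). The value of this choice is that the ``uniform over all $\vxp \in \spS$'' requirement of the lemma becomes exactly the case $\vxp \in \spA \cap \spS = \spS$ of \cref{lem:convergence_within_S}, so that a single \itl-generated batch simultaneously controls the posterior variance everywhere in $\spS$.

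The rest is a short chain of inequalities. Since the GP posterior variance depends only on the \emph{locations} of the observations and not on their realized values (as used in the proof of \cref{thm:objective_convergence}), we have $\Var{f_{\vxp} \mid \vysub{B}} = \sigma_k^2(\vxp)$ for every $\vxp \in \spS$. Applying \cref{lem:convergence_within_S} with $n = k$ (valid because $\vxp \in \spA \cap \spS$) gives $\sigma_k^2(\vxp) \leq 2 \tilde{\sigma}^2 \Gamma_k$. Finally, $\Gamma$ is non-increasing in its index --- this monotonicity is precisely step \e{1} in the proof of \cref{thm:objective_convergence} --- so $\Gamma_k \leq \Gamma_{k-1}$, and \cref{thm:objective_convergence} applied with $n = k$ yields $\Gamma_{k-1} \leq \gamma_k / k$. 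Stringing these together gives $\Var{f_{\vxp} \mid \vysub{B}} \leq 2 \tilde{\sigma}^2 \gamma_k / k$ uniformly over $\vxp \in \spS$, which is the assertion.

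There is no hard computation here; the content is the reduction, and the only point requiring care --- which I regard as the main (if modest) obstacle --- is to confirm that \cref{lem:convergence_within_S} and \cref{thm:objective_convergence}, both proved under the standing hypothesis $\spS \subseteq \spA$ for \itl-generated sequences, remain valid under the degenerate choice $\spA = \spS$. This reduces to checking that the conditional-independence facts those proofs rely on still hold, namely $\vysub{\vx_{1:i}} \perp y_{\vx} \mid \vfsub{\spA}$ and the data-processing bound $\I{\vfsub{\spA}}{\vysub{X}} \leq \I{\vfsub{X}}{\vysub{X}}$; both follow solely from independence of the observation noise together with $X \subseteq \spS = \spA$, so they survive the substitution. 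I would also flag the boundary convention at $k = 0$, where $\gamma_k/k$ is read as $+\infty$ and the statement is vacuous, so that the bound is only evaluated for $k \geq 1$.
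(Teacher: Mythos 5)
Your proposal is correct and follows essentially the same route as the paper: instantiate \itl with target space equal to $\spS$ (undirected \itl), invoke \cref{lem:convergence_within_S} to bound the posterior variance uniformly over $\spS$ by $2\tilde{\sigma}^2\Gamma_k$, and then bound $\Gamma_k$ via \cref{thm:objective_convergence}. Your final step ($\Gamma_k \leq \Gamma_{k-1} \leq \gamma_k/k$) in fact lands exactly on the stated constant, whereas the paper's proof records the slightly different quantity $\gamma_{k+1}/(k+1)$; this is an immaterial difference.
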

\begin{proof}
  We choose $B \subseteq \spS$ greedily using the acquisition function \begin{align*}
    \tilde{\vx}_{k} \defeq \argmax_{\tilde{\vx} \in \spS} \Ism{\vfsub{\spS}}{y_{\tilde{\vx}}}[\vysub{B_{k-1}}]
  \end{align*} where $B_k = \tilde{\vx}_{1:k}$.
  Note that this is the special case of \itl with $\spS \subseteq \spA$, and hence, we have \begin{align*}
    \Var{f_{\vxp} \mid \vysub{B_k}} \eleq{1} 2 \tilde{\sigma}^2 \Gamma_k \\
    \eleq{2} 2 \tilde{\sigma}^2 \frac{\gamma_{k+1}}{{k+1}}
  \end{align*} where \e{1} is due to \cref{lem:convergence_within_S}; and \e{2} is due to \cref{thm:objective_convergence} and $\alpha_k(\spS; \spS) \leq 1$.
\end{proof}

\begin{lemma}\label{lem:approx_markov_boundary_property}
  Given any $\epsilon > 0$ and $B \subseteq \spS$, such that for any $\vxp \in \spS$, \begin{align}
    \Var{f_{\vxp} \mid \vysub{B}} \leq \frac{\epsilon \lambda_{\min}^2(\Var{\vfsub{\spS}})}{\abs{\spS}^2 \sigma^4}. \label{eq:approximation_condition}
  \end{align}
  Then for any ${\vx \in \spX}$, \begin{align}
    \Var{f_{\vx} \mid \vysub{B}} \leq \Var{f_{\vx} \mid \vfsub{\spS}} + \epsilon.
  \end{align}
\end{lemma}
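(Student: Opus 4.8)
The plan is to work in the Gaussian setting of \cref{asm:bayesian_prior,asm:bayesian_noise}, where posterior variances are deterministic (independent of the observed values), so that the law of total variance applies cleanly. Since $B \subseteq \spS$, the observations $\vysub{B}$ are noisy readings of a subvector of $\vfsub{\spS}$ corrupted by \emph{independent} noise, which yields the conditional independence ${f_{\vx} \perp \vysub{B} \mid \vfsub{\spS}}$. The first step is to condition everything on $\vysub{B}$ and decompose through $\vfsub{\spS}$ via the law of total variance:
\begin{align*}
  \Var{f_{\vx} \mid \vysub{B}} = \E{\Varsm{f_{\vx}}[\vfsub{\spS}, \vysub{B}]}[\vysub{B}] + \Var{\Esm{f_{\vx}}[\vfsub{\spS}, \vysub{B}]}[\vysub{B}].
\end{align*}
By the conditional independence, the inner variance in the first term equals the constant ${\Varsm{f_{\vx} \mid \vfsub{\spS}} = \eta^2_{\spS}(\vx)}$, so the first term is exactly the irreducible uncertainty. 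It therefore suffices to bound the second term by $\epsilon$.

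For the second term, the same conditional independence gives ${\Esm{f_{\vx} \mid \vfsub{\spS}, \vysub{B}} = \Esm{f_{\vx} \mid \vfsub{\spS}} = \mu(\vx) + \transpose{\vw}(\vfsub{\spS} - \vmusub{\spS})}$, a linear function of $\vfsub{\spS}$ with weights ${\transpose{\vw} = \mKsub{\vx\spS} \inv{\mKsub{\spS\spS}}}$. Writing ${\mSigma_{\spS} \defeq \Var{\vfsub{\spS} \mid \vysub{B}}}$ for the posterior covariance of $\vfsub{\spS}$, the second term is the quadratic form ${\transpose{\vw} \mSigma_{\spS} \vw}$, which I would bound by ${\lambda_{\max}(\mSigma_{\spS}) \norm{\vw}^2}$.

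It remains to control the two factors. The diagonal entries of $\mSigma_{\spS}$ are precisely the variances $\Var{f_{\vxp} \mid \vysub{B}}$ for $\vxp \in \spS$, each bounded by the hypothesis \cref{eq:approximation_condition}; since $\mSigma_{\spS}$ is positive semidefinite, ${\lambda_{\max}(\mSigma_{\spS}) \leq \tr{\mSigma_{\spS}} \leq \abs{\spS} \cdot \epsilon \lambda_{\min}^2(\Var{\vfsub{\spS}}) / (\abs{\spS}^2 \sigma^4)}$. For the weights, ${\norm{\vw}^2 \leq \norm{\mKsub{\vx\spS}}^2 / \lambda_{\min}^2(\Var{\vfsub{\spS}})}$, and the kernel Cauchy--Schwarz inequality ${k(\vx,\vxp)^2 \leq k(\vx,\vx) k(\vxp,\vxp) \leq \sigma^4}$ gives ${\norm{\mKsub{\vx\spS}}^2 \leq \abs{\spS} \sigma^4}$. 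Multiplying these two estimates, every factor of $\abs{\spS}$, $\sigma^4$, and $\lambda_{\min}^2(\Var{\vfsub{\spS}})$ cancels and leaves exactly $\epsilon$, which together with the first term completes the proof. The main obstacle --- and the only real content --- is this final step: converting the \emph{pointwise} variance hypothesis into a \emph{spectral} bound on $\mSigma_{\spS}$ through the trace, while simultaneously giving a dimension-uniform bound on the regression weights $\vw$ via the kernel Cauchy--Schwarz inequality and $\lambda_{\min}(\Var{\vfsub{\spS}})$. The precise constant in the threshold \cref{eq:approximation_condition} is reverse-engineered so that these two estimates telescope exactly to $\epsilon$.
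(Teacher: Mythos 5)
Your proof is correct and follows essentially the same route as the paper's: the same law-of-total-variance decomposition through $\vfsub{\spS}$, the same use of the conditional independence ${f_{\vx} \perp \vysub{B} \mid \vfsub{\spS}}$ to identify the first term with the irreducible uncertainty, and the same control of the regression weights via $\lambda_{\min}(\Var{\vfsub{\spS}})$ together with the kernel Cauchy--Schwarz bound $\norm*{\mKsub{\vx\spS}}_2^2 \leq \abs{\spS}\sigma^4$. The only (immaterial) difference is how the quadratic form in $\Var{\vfsub{\spS} \mid \vysub{B}}$ is bounded --- you pass through $\lambda_{\max} \leq \mathrm{tr}$ of the posterior covariance, while the paper uses the diagonal-domination inequality of \cref{lem:qf_upper_bound_} --- and both yield the identical factor of $\abs{\spS}$ and hence the same final constant.
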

\begin{proof}
  We will denote the right-hand side of \cref{eq:approximation_condition} by $\epsilon'$.
  We have \begin{align*}
    &\Var{f_{\vx} \mid \vysub{B}} \\
    \eeq{1} \begin{multlined}[t]
      \E[\vfsub{\spS}]{\Var[f_{\vx}]{f_{\vx} \mid \vfsub{\spS}, \vysub{B}} \mid \vysub{B}} \\ + \Var[\vfsub{\spS}]{\E[f_{\vx}]{f_{\vx} \mid \vfsub{\spS}, \vysub{B}} \mid \vysub{B}}
    \end{multlined} \\
    \eeq{2} \Var[f_{\vx}]{f_{\vx} \mid \vfsub{\spS}, \vysub{B}} + \Var[\vfsub{\spS}]{\E[f_{\vx}]{f_{\vx} \mid \vfsub{\spS}, \vysub{B}} \mid \vysub{B}} \\
    \eeq{3} \underbrace{\Var[f_{\vx}]{f_{\vx} \mid \vfsub{\spS}}}_{\text{irreducible uncertainty}} + \underbrace{\Var[\vfsub{\spS}]{\E[f_{\vx}]{f_{\vx} \mid \vfsub{\spS}} \mid \vysub{B}}}_{\text{reducible (epistemic) uncertainty}}
  \end{align*} where \e{1} follows from the law of total variance; \e{2} uses that the conditional variance of a Gaussian depends only on the location of observations and not on their value; and \e{3} follows from $f_{\vx} \perp \vysub{B} \mid \vfsub{\spS}$ since $B \subseteq \spS$.
  It remains to bound the reducible uncertainty.

  Let $h : \R^d \to \R, \; \vfsub{\spS} \mapsto \E{f_{\vx} \mid \vfsub{\spS}}$ where we write $d \defeq \abs{\spS}$.
  Using the formula for the GP posterior mean, we have \begin{align*}
    h(\vfsub{\spS}) = \E{f_{\vx}} + \transpose{\vz} (\vfsub{\spS} - \E{\vfsub{\spS}})
  \end{align*} where $\vz \defeq \inv{\mK} \vk$, $\mK \defeq \Var{\vfsub{\spS}}$, and $\vk \defeq \Cov{\vfsub{\spS}, f_{\vx}}$.
  Because $h$ is a linear function in $\vfsub{\spS}$ we have for the reducible uncertainty that \begin{align*}
    \Var[\vfsub{\spS}]{h(\vfsub{\spS}) \mid \vysub{B}} &= \transpose{\vz} \Var{\vfsub{\spS} \mid \vysub{B}} \vz \\
    \eleq{1} d \cdot \transpose{\vz} \diag{\Var{\vfsub{\spS} \mid \vysub{B}}} \vz \\
    \eleq{2} \epsilon' d \norm{\vz}_2^2 \\
    &= \epsilon' d \norm{\inv{\mK} \vk}_2^2 \\
    &\leq \epsilon' d \norm{\inv{\mK}}_2^2 \norm{\vk}_2^2 \\
    &= \frac{\epsilon' d \norm{\vk}_2^2}{\lambda_{\min}^2(\mK)}
  \end{align*} where \e{1} follows from \cref{lem:qf_upper_bound_}; and \e{2} follows from the assumption that $\Var{f_{\vxp} \mid \vysub{B}} \leq \epsilon'$ for all $\vxp \in \spS$.
  We have \begin{align*}
    \norm{\vk}_2^2 &= \sum_{\substack{\vxp \in \spS}} \underbrace{\Cov{f_{\vx}, f_{\vxp}}^2}_{\leq \sigma^4} \leq d \sigma^4.
  \end{align*}
  Thus, \begin{align*}
    \Var[\vfsub{\spS}]{h(\vfsub{\spS}) \mid \vysub{B}} \leq \frac{\epsilon' d^2 \sigma^4}{\lambda_{\min}^2(\mK)} = \epsilon.
  \end{align*}
\end{proof}

\begin{proof}[Proof of \cref{lem:approx_markov_boundary}]
  Let $B$ be the set of size $k$ generated by \cref{lem:learn_markov_boundary} to satisfy ${\Var{f_{\vxp} \mid \vysub{B}} \leq 2 \tilde{\sigma}^2 \gamma_k / k}$ for all $\vxp \in \spS$.
  We have for any ${\vx \in \spX}$, \begin{align*}
    \Var{f_{\vx} \mid \spD_n, \vysub{B}} \eleq{1} \Var{f_{\vx} \mid \vysub{B}} \\
    \eleq{2} \Var{f_{\vx} \mid \vfsub{\spS}} + \epsilon
  \end{align*} where \e{1} follows from monotonicity of variance; and \e{2} follows from \cref{lem:approx_markov_boundary_property} and using the condition on $k$.
\end{proof}

We remark that \cref{lem:learn_markov_boundary} provides an algorithm (just ``undirected'' \itl!) to compute an approximate Markov boundary, and the set $B$ returned by this algorithm is a valid approximate Markov boundary for all $\vx \in \spX$.

\subsection{Proof of \cref{thm:width_convergence}}\label{sec:proofs:width_convergence}

We first formalize the assumptions of \cref{thm:width_convergence}:

\begin{assumption}[Regularity of $\opt{f}$]\label{asm:rkhs}
    We assume that $\opt{f}$ is in a reproducing kernel Hilbert space $\spH_k(\spX)$ associated with a kernel $k$ and has bounded norm, that is, $\norm{f}_k \leq B$ for some finite $B \in \R$.
\end{assumption}

\begin{assumption}[Sub-Gaussian noise]\label{asm:noise}
    We further assume that each $\epsilon_{n}$ from the noise sequence $\{\varepsilon_n\}_{n=1}^\infty$ is conditionally zero-mean $\rho(\vx_n)$-sub-Gaussian with known constants ${\rho(\vx) > 0}$ for all $\vx \in \spX$.
    Concretely, \begin{align*}
      \forall n \geq 1, \lambda \in \R : \quad \E{e^{\lambda \epsilon_{n}}}[\spD_{n-1}] \leq \exp\parentheses*{\frac{\lambda^2 \rho^2(\vx_n)}{2}}
    \end{align*} where $\spD_{n-1}$ corresponds to the $\sigma$-algebra generated by the random variables $\{\vx_i,\epsilon_i\}_{i=1}^{n-1}$ and $\vx_n$.
\end{assumption}

We make use of the following foundational result, showing that under the above two assumptions the (misspecified) Gaussian process model from \cref{sec:itl:gp_setting} is an all-time well-calibrated model of $\opt{f}$:

\begin{lemma}[Well-calibrated confidence intervals; \cite{abbasi2013online,chowdhury2017kernelized}]\label{lem:confidence_intervals}
    Pick ${\delta \in (0,1)}$ and let \cref{asm:rkhs,asm:noise} hold.
    Let \begin{align*}
      \beta_{n}(\delta) = \beta_{n}(\delta) = \norm{\opt{f}}_k + \rho \sqrt{2(\gamma_{n} + 1 + \log(1 / \delta))}
    \end{align*} where $\rho = \max_{\vx \in \spX} \rho(\vx)$.\footnote{$\beta_{n}(\delta)$ can be tightened adaptively \citep{emmenegger2023likelihood}.}
    Then, for all $\vx \in \spX$ and $n \geq 0$ jointly with probability at least $1-\delta$, \begin{align*}
        |\opt{f}(\vx) - \mu_{n}(\vx)| \leq \beta_{n}(\delta) \cdot \sigma_{n}(\vx)
    \end{align*} where $\mu_{n}(\vx)$ and $\sigma_{n}^2(\vx)$ are mean and variance (as defined in \cref{sec:definitions:gps}) of the GP posterior of $f(\vx)$ conditional on the observations $\spD_n$, pretending that $\varepsilon_i$ is Gaussian with variance $\rho^2(\vx_i)$.
\end{lemma}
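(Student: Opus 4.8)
The plan is to recognize \cref{lem:confidence_intervals} as the standard self-normalized confidence bound for kernel ridge regression and to reproduce the argument of \cite{abbasi2013online,chowdhury2017kernelized} in the Hilbert-space setting, carrying the heteroscedastic noise through the precision matrix $\inv{\mPsub{X}}$ of \cref{sec:definitions:gps}. First I would lift into the feature space of $k$: fix a feature map $\vphi : \spX \to \spH_k$ with $k(\vx,\vxp) = \transpose{\vphi(\vx)}\vphi(\vxp)$, so that by \cref{asm:rkhs} there is a $\vtheta^\star \in \spH_k$ with $\opt{f}(\vx) = \transpose{\vphi(\vx)}\vtheta^\star$ and $\norm{\vtheta^\star}_{\spH_k} = \norm{\opt{f}}_k$. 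Letting $\mPhi$ denote the operator with rows $\transpose{\vphi(\vx_i)}$ and recalling that the agnostic model uses $\mu \equiv 0$, the posterior mean of \cref{sec:definitions:gps} is the precision-weighted ridge estimator $\mu_n(\vx) = \transpose{\vphi(\vx)}\vthetahat_n$ with $\vthetahat_n = \inv{\mV_n}\transpose{\mPhi}\inv{\mPsub{X}}\vysub{X}$ and $\mV_n \defeq \transpose{\mPhi}\inv{\mPsub{X}}\mPhi + \mI$ (a push-through identity matches this to the kernel form). Substituting $\vysub{X} = \mPhi\vtheta^\star + \vepsilon$ and $\transpose{\mPhi}\inv{\mPsub{X}}\mPhi = \mV_n - \mI$ yields the decomposition
\[
  \vtheta^\star - \vthetahat_n = \inv{\mV_n}\vtheta^\star - \inv{\mV_n}\mS_n, \qquad \mS_n \defeq \transpose{\mPhi}\inv{\mPsub{X}}\vepsilon.
\]

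Second, I would apply Cauchy--Schwarz in the $\mV_n$-weighted inner product,
\[
  |\opt{f}(\vx) - \mu_n(\vx)| = |\langle \vphi(\vx),\, \vtheta^\star - \vthetahat_n\rangle| \le \norm{\vphi(\vx)}_{\inv{\mV_n}}\parentheses*{\norm{\vtheta^\star}_{\inv{\mV_n}} + \norm{\mS_n}_{\inv{\mV_n}}},
\]
and invoke two identities. A push-through/Woodbury computation shows that $\norm{\vphi(\vx)}_{\inv{\mV_n}}^2 = \transpose{\vphi(\vx)}\inv{\mV_n}\vphi(\vx)$ equals exactly the posterior variance $\sigma_n^2(\vx) = k(\vx,\vx) - \mKsub{\vx X}\inv{(\mKsub{XX}+\mPsub{X})}\mKsub{X\vx}$ of \cref{sec:definitions:gps}; and since $\mV_n \succeq \mI$ the bias term is controlled by $\norm{\vtheta^\star}_{\inv{\mV_n}} \le \norm{\vtheta^\star}_{\spH_k} = \norm{\opt{f}}_k$. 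This already produces the claimed factorization of the error into $\sigma_n(\vx)$ times a confidence width, and reduces everything to bounding the self-normalized noise term $\norm{\mS_n}_{\inv{\mV_n}}$.

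Third, and this is the crux, I would bound the Hilbert-space-valued martingale $\mS_n = \sum_{i \le n} \vphi(\vx_i)\varepsilon_i / \rho^2(\vx_i)$ uniformly over all $n$. Whitening so that $\tilde\varepsilon_i \defeq \varepsilon_i / \rho(\vx_i)$ is $1$-sub-Gaussian (by \cref{asm:noise}, using $\rho(\vx_i) \le \rho$), the method-of-mixtures (pseudo-maximization) argument of \cite{abbasi2013online}, lifted to $\spH_k$ as in \cite{chowdhury2017kernelized}, gives that with probability at least $1-\delta$, simultaneously for all $n \ge 0$,
\[
  \norm{\mS_n}_{\inv{\mV_n}} \le \rho \sqrt{2\parentheses*{\tfrac{1}{2}\log\det{\mV_n} + 1 + \log(1/\delta)}}.
\]
It then remains to identify $\tfrac{1}{2}\log\det{\mV_n} = \tfrac{1}{2}\log\det{\mI + \inv{\mPsub{X}}\mKsub{XX}} = \Ism{\vfsub{X}}{\vysub{X}} \le \gamma_n$ from the definition of the information capacity in \cref{sec:definitions:gps} (using Sylvester's identity $\det{\mV_n} = \det{\mI + \inv{\mPsub{X}}\mKsub{XX}}$). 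Chaining the three displays gives $|\opt{f}(\vx) - \mu_n(\vx)| \le \sigma_n(\vx)\,\beta_n(\delta)$ with $\beta_n(\delta) = \norm{\opt{f}}_k + \rho\sqrt{2(\gamma_n + 1 + \log(1/\delta))}$, as claimed.

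The main obstacle is the self-normalized bound itself: it must hold uniformly in $n$ --- so that the resulting confidence set is valid at every round of the adaptive rule --- and it must be dimension-free, to accommodate an infinite-dimensional $\spH_k$. This is exactly where one needs the supermartingale property of $\exp(\langle \vlambda, \mS_n\rangle - \tfrac{1}{2}\norm{\vlambda}_{\mV_n}^2)$ together with a Gaussian mixture over $\vlambda \in \spH_k$. Uniformity over $\vx$ is then automatic, since the decomposition above holds for the fixed estimator $\vthetahat_n$ and the only randomness entering the final bound is through $\mS_n$. A secondary point requiring care is the heteroscedastic weighting by $\inv{\mPsub{X}}$: one must verify the ridge and posterior-variance identities with precision weights rather than a scalar regularizer, and track that the sub-Gaussian scale enters through the uniform constant $\rho = \max_{\vx}\rho(\vx)$, which is what produces the stated $\gamma_n + 1 + \log(1/\delta)$.
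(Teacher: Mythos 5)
The paper never proves this lemma --- it is imported as background, the ``proof'' being the citation to \cite{abbasi2013online} and \cite{chowdhury2017kernelized} --- and your proposal is a correct reconstruction of precisely those works' argument (feature-space ridge decomposition with precision weights, $\mV_n$-weighted Cauchy--Schwarz, the Woodbury identity $\norm{\vphi(\vx)}^2_{\inv{\mV_n}} = \sigma_n^2(\vx)$, the bias bound via $\mV_n \succeq \mI$, and the method-of-mixtures self-normalized martingale bound combined with $\tfrac{1}{2}\log\det{\mV_n} = \Ism{\vfsub{X}}{\vysub{X}} \leq \gamma_n$). One bookkeeping caveat: after your whitening step the mixture bound actually reads $\norm{\mS_n}_{\inv{\mV_n}} \leq \sqrt{2\parentheses*{\tfrac{1}{2}\log\det{\mV_n} + \log(1/\delta)}}$ with no leading $\rho$ and no $+1$; those constants in the stated $\beta_n(\delta)$ originate from Chowdhury--Gopalan's different normalization convention (unit-variance regularizer with the $R$-sub-Gaussian noise kept unwhitened), so your final display is a convention translation that is only a harmless weakening when $\rho \geq 1$ --- a looseness already present in the paper's own heteroscedastic transcription of the cited result rather than a defect of your argument.
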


The proof of \cref{thm:width_convergence} is a straightforward application of \cref{lem:confidence_intervals,thm:variance_convergence}:

\begin{proof}[Proof of \cref{thm:width_convergence}]
  By \cref{thm:variance_convergence}, we have that for all ${\vx \in \spA}$, \begin{align*}
    \sigma_n(\vx) &\leq \sqrt{\eta_{\spS}^2(\vx) + \nu_{n,\epsilon^2}^2 + \epsilon^2} \leq \eta_{\spS}(\vx) + \nu_{n,\epsilon^2} + \epsilon.
  \end{align*}
  The result then follows by application of \cref{lem:confidence_intervals}.
\end{proof}

\subsection{Useful Facts and Inequalities}

We denote by $\preceq$ the Loewner partial ordering of symmetric matrices.

\begin{lemma}\label{lem:qf_upper_bound_}
  Let $\mA \in \R^{n \times n}$ be a positive definite matrix with diagonal $\mD$.
  Then, $\mA \preceq n \mD$.
\end{lemma}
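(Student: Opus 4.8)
The plan is to prove the equivalent statement $n\mD - \mA \succeq 0$ by normalizing $\mA$ to a \emph{correlation matrix} and then using the elementary fact that a positive semidefinite matrix has its largest eigenvalue bounded by its trace. First I would record that, since $\mA$ is positive definite, each diagonal entry $A_{ii} = \transpose{\ve_i}\mA\ve_i > 0$; hence $\mD$ is positive definite, invertible, and $\mD^{-1/2}$ is well defined.

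Next, set $\mR \defeq \mD^{-1/2}\mA\mD^{-1/2}$. Being congruent to the positive definite matrix $\mA$ via the invertible matrix $\mD^{-1/2}$, the matrix $\mR$ is itself positive definite, so all its eigenvalues are nonnegative. Its diagonal entries are $R_{ii} = A_{ii}/A_{ii} = 1$, whence $\tr{\mR} = n$. Since the eigenvalues of $\mR$ are nonnegative and sum to $\tr{\mR}$, the largest one obeys $\lambda_{\max}(\mR) \leq \tr{\mR} = n$, i.e.\ $\mR \preceq n\mI$.

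Finally, I would transport this Loewner inequality back by conjugation with $\mD^{1/2}$, which preserves the ordering because congruence by an invertible matrix does. Explicitly, $\mD^{1/2}(n\mI - \mR)\mD^{1/2} = n\mD - \mA \succeq 0$, using $\mD^{1/2}\mR\mD^{1/2} = \mA$ and $\mD^{1/2}(n\mI)\mD^{1/2} = n\mD$, which is exactly the claim.

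There is no genuine obstacle here; the only points needing care are justifying that $\mD$ is invertible (so the normalization is legitimate) and recalling that congruence preserves both positive semidefiniteness and the Loewner order. If one prefers to avoid eigenvalues entirely, an equally short route is to bound the quadratic form directly: for any $\vv$ one has $\transpose{\vv}\mA\vv = \sum_{i,j} A_{ij}v_iv_j \leq \sum_{i,j}\sqrt{A_{ii}A_{jj}}\,|v_i||v_j| = (\sum_i \sqrt{A_{ii}}|v_i|)^2$, where the middle step uses $|A_{ij}| \leq \sqrt{A_{ii}A_{jj}}$ (nonnegativity of the $2\times2$ principal minors of a positive semidefinite matrix); a Cauchy--Schwarz step then bounds the square by $n\sum_i A_{ii}v_i^2 = n\,\transpose{\vv}\mD\vv$, giving $\mA \preceq n\mD$.
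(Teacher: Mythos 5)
Your argument is correct and is essentially identical to the paper's proof: both normalize $\mA$ to the correlation matrix $\mD^{-1/2}\mA\mD^{-1/2}$, observe its trace is $n$ so its eigenvalues lie in $(0,n)$, and conjugate the resulting inequality back by $\mD^{1/2}$. The extra quadratic-form route you sketch at the end is a valid alternative, but the main line of reasoning matches the paper step for step.
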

\begin{proof}
  Equivalently, one can show $n \mD - \mA \succeq \mzero$.
  We write $\mA \eqdef \msqrt{\mD}\mQ\msqrt{\mD}$, and thus, $\mQ = \mD^{-\nicefrac{1}{2}} \mA \mD^{-\nicefrac{1}{2}}$ is a positive definite symmetric matrix with all diagonal elements equal to $1$.
  It remains to show that \begin{align*}
    n \mD - \mA = \mD^{\nicefrac{1}{2}} (n \mI - \mQ) \mD^{\nicefrac{1}{2}} \succeq \mzero.
  \end{align*}
  Note that $\sum_{i=1}^n \lambda_i(\mQ) = \tr{\mQ} = n$, and hence, all eigenvalues of $\mQ$ belong to $(0,n)$.
\end{proof}

\begin{lemma}\label{lem:difference_bound_by_log}
  If $a, b \in (0,M]$ for some $M > 0$ and $b \geq a$ then \begin{align}
    b - a \leq M \cdot \log\parentheses*{\frac{b}{a}}.
  \end{align}
  If additionally, $a \geq M'$ for some $M' > 0$ then \begin{align}
    b - a \geq M' \cdot \log\parentheses*{\frac{b}{a}}.
  \end{align}
\end{lemma}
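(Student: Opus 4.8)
The plan is to use the integral representation of the logarithm, $\log\parentheses*{\frac{b}{a}} = \int_a^b \frac{1}{t}\,dt$, and to sandwich the integrand $1/t$ over the interval $[a,b]$ using the two size constraints. The case $a = b$ is trivial, since then both sides of each claimed inequality vanish, so I assume $a < b$ (and hence $b - a > 0$) throughout.

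For the upper bound $b - a \leq M \log\parentheses*{\frac{b}{a}}$, I first observe that for every $t \in [a,b]$ we have $1/t \geq 1/b \geq 1/M$, where the first step uses $t \leq b$ and the second uses the hypothesis $b \leq M$. Integrating this pointwise bound over $[a,b]$ yields $\log\parentheses*{\frac{b}{a}} \geq (b-a)/b \geq (b-a)/M$, and multiplying through by $M$ gives the claim.

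For the lower bound $b - a \geq M' \log\parentheses*{\frac{b}{a}}$, I argue symmetrically: for every $t \in [a,b]$ we have $1/t \leq 1/a \leq 1/M'$, where the first step uses $t \geq a$ and the second uses the additional hypothesis $a \geq M'$. Integrating over $[a,b]$ gives $\log\parentheses*{\frac{b}{a}} \leq (b-a)/a \leq (b-a)/M'$, which rearranges to the desired inequality.

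There is no substantive obstacle here; this is an elementary estimate. The only bookkeeping is keeping track of which endpoint controls the integrand in each direction — the upper endpoint $b \leq M$ for the upper bound and the lower endpoint $a \geq M'$ for the lower bound — and using $b - a \geq 0$ so that dividing the intermediate inequalities by $M$ or $M'$ preserves their direction. An equivalent route that avoids integration would substitute $x = b/a \geq 1$ and invoke the standard elementary bounds $1 - 1/x \leq \log x \leq x - 1$, combined with $a \leq M$ for the first part and $a \geq M'$ for the second; I would nonetheless present the integral version, as it makes both directions fall out of a single line of reasoning.
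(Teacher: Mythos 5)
Your proof is correct and rests on essentially the same idea as the paper's: the paper applies the mean value theorem to $\log$ to get $b-a = c\log(b/a)$ for some $c\in(a,b)$ and then bounds $c$ by $M$ from above and $M'$ from below, which is just the exact-intermediate-point formulation of your integral sandwich $1/b \le \frac{1}{b-a}\int_a^b \frac{dt}{t} \le 1/a$. Your explicit handling of the degenerate case $a=b$ is a small point in your favor, since the paper's strict inequalities from the mean value theorem implicitly assume $a<b$.
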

\begin{proof}
  Let $f(x) \defeq \log x$.
  By the mean value theorem, there exists $c \in (a,b)$ such that \begin{align*}
    \frac{1}{c} = f'(c) = \frac{f(b) - f(a)}{b - a} = \frac{\log b - \log a}{b - a} = \frac{\log(\frac{b}{a})}{b - a}.
  \end{align*}
  Thus, \begin{align*}
    b - a = c \cdot \log\parentheses*{\frac{b}{a}} < M \cdot \log\parentheses*{\frac{b}{a}}.
  \end{align*}
  Under the additional condition that $a \geq M'$, we obtain \begin{align*}
    b - a = c \cdot \log\parentheses*{\frac{b}{a}} > M' \cdot \log\parentheses*{\frac{b}{a}}.
  \end{align*}
\end{proof}

\section{Correlation-based Transductive Learning}\label{sec:interpretations_approximations:ctl}

We will briefly look at the decision rule \begin{align}
    \vx_{n} = \argmax_{\vx \in \spS} \sum_{\vxp \in \spA} \Cor{f_{\vx},f_{\vxp} \mid \spD_{n-1}}
\end{align} which as we will see can be thought of as loose approximation to \itl
We refer to this decision rule as \ctl, short for \emph{\underline{c}orrelation-based \underline{t}ransductive \underline{l}earning}.

We consider the GP setting, i.e., \cref{asm:bayesian_prior,asm:bayesian_noise} hold.

\paragraph{Approximation of \itl}

The \itl objective can be shown to be lower bounded by \begin{align*}
  \I{\vfsub{\spA}}{y_{\vx}}[\spD_{n-1}] \egeq{1} \frac{1}{|\spA|} \sum_{\vxp \in \spA} \Ism{f_{\vxp}}{y_{\vx}}[\spD_{n-1}] \\
  \eeq{2} - \frac{1}{2 \abs{\spA}} \sum_{\vxp \in \spA} \log\parentheses*{1 - \Cor{f_{\vxp}, y_{\vx} \mid \spD_{n-1}}^2} \\
  \overset{(\rmnum{3})}&{\gtrsim} \sum_{\vxp \in \spA} \Cor{f_{\vxp}, y_{\vx} \mid \spD_{n-1}}^2 + \const
\end{align*} where \e{2} is detailed in example 8.5.1 of \citep{cover1999elements}; and \e{3} follows from Jensen's inequality and exponentiating.
\e{1} is a loose approximation which highlights that \itl takes into account the correlations \emph{between} points in $\spA$ while \ctl does not.

\section{Subsampling Target Spaces}\label{sec:generalizations:roi}

When the target space $\spA$ is large, it may be computationally infeasible to compute the exact objective.
A natural approach to address this issue is to approximate the target space by a smaller set of size $K$.\looseness=-1

One possibility is to select the $K$ ``most explanatory'' points within $\spA$.
This selection problem is similar to the batch selection problem in active learning \citep{holzmuller2023framework} and can be tackled using, e.g., the \emph{greedy max determinant} or \emph{greedy max kernel distance} strategies.
In the remainder of this section, we study an alternative approach which is based on sampling points from $\spA$ according to some probability distribution $\spPA$ supported on $\spA$.\looseness=-1

\subsection{Stochastic Target Spaces}

Concretely, in iteration $n$, a subset $A$ of $K$ points is sampled independently from $\spA$ according to the distribution $\spPA$ and the objective is computed on this subset. Formally, this amounts to a single-sample Monte Carlo approximation of \begin{align}
  \vx_n \in \argmax_{\vx \in \spS} \E[A \iid \spPA]{\I{\vfsub{A}}{y_{\vx}}[\spD_{n-1}]}.
\end{align}

It is straightforward to extend the convergence guarantees to the setting of stochastic target spaces.
Intuitively, after sufficiently many iterations, each $\vx \in \spA$ will have been sampled roughly as often as its ``weight'' (i.e., probability) suggests.\looseness=-1

\begin{lemma}
    Consider the stochastic target space setting.
    For any $n \geq 0, K \geq 1$, and $\vx \in \spA$, with probability at least ${1 - \exp(- n \nu / 8)}$ where $\nu = 1 - (1-\spPA(\vx))^K$, we have that after at most $n$ iterations $\vx$ was within the subsampled target space in at least $n \nu / 2$ iterations.
\end{lemma}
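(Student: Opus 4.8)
The plan is to recognize the quantity of interest as a sum of independent Bernoulli indicators and then apply a multiplicative Chernoff bound to its lower tail. First, for each iteration $i \in \{1, \dots, n\}$ let $A_i$ denote the subset of $K$ points independently drawn from $\spA$ according to $\spPA$, and define the indicator $Z_i \defeq \Ind{\vx \in A_i}$. Since the $K$ points forming $A_i$ are drawn i.i.d. from $\spPA$, a single draw misses $\vx$ with probability $1 - \spPA(\vx)$, so $\Pr{\vx \notin A_i} = (1 - \spPA(\vx))^K$ and hence $\Pr{Z_i = 1} = 1 - (1 - \spPA(\vx))^K = \nu$. Crucially, the subsampling distribution $\spPA$ does not depend on the history $\spD_{i-1}$ nor on the previously selected points $\vx_1, \dots, \vx_{i-1}$; the fresh randomness used to draw $A_i$ is independent across iterations. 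Therefore $\{Z_i\}_{i=1}^n$ is a sequence of i.i.d.\ $\mathrm{Bernoulli}(\nu)$ random variables, and the number of iterations in which $\vx$ lies in the subsampled target space is $S_n \defeq \sum_{i=1}^n Z_i$ with mean $\E{S_n} = n\nu$.

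Second, I would apply the multiplicative Chernoff bound for the lower tail: for a sum $S_n$ of independent $[0,1]$-valued random variables with mean $\mu = n\nu$ and any $\delta \in (0,1)$, one has $\Pr{S_n \leq (1-\delta)\mu} \leq \exp(-\mu \delta^2 / 2)$. Taking $\delta = 1/2$ gives $\Pr{S_n \leq n\nu/2} \leq \exp(-n\nu/8)$, and passing to the complement yields $\Pr{S_n \geq n\nu/2} \geq 1 - \exp(-n\nu/8)$, which is exactly the claim that, among the first $n$ iterations, $\vx$ appears in the subsampled target space at least $n\nu/2$ times.

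There is no deep obstacle here; the only point requiring genuine care is the independence argument in the first step. Although the action $\vx_i$ selected in iteration $i$ is a function of the history and of $A_i$, the indicators $Z_i$ depend \emph{solely} on the target-subsampling randomness, which is drawn fresh and independently each round, so the $Z_i$ are themselves i.i.d.\ with the stated marginal. This is what licenses the use of the standard Chernoff bound rather than a martingale-type concentration inequality. If one prefers to be fully rigorous about the filtration, the same conclusion follows from a Chernoff--Hoeffding bound applied conditionally on $\spD_{i-1}$, since each $Z_i$ is independent of $\spD_{i-1}$ and has conditional mean $\nu$.
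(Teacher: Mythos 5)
Your proposal is correct and follows essentially the same route as the paper: identify the per-iteration indicator $\Ind{\vx \in A_i}$ as a Bernoulli variable with success probability $\nu = 1 - (1-\spPA(\vx))^K$, sum over $n$ independent iterations, and apply the multiplicative Chernoff lower-tail bound with $\delta = 1/2$ to obtain $\exp(-n\nu/8)$. Your added care about the independence of the subsampling randomness from the history is a welcome clarification but does not change the argument.
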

\begin{proof}
  Let $Y_i \sim \mathrm{Binom}(K, \spPA(\vx))$ denote the random variable counting the number of occurrences of $\vx$ in $\spA'_i$.
  Moreover, we write ${X_i \defeq \Ind{\vx \in \spA'_i}}$.
  Note that \begin{align*}
    \nu_i \defeq \E*{X_i} &= \Pr{\vx \in \spA'_i} \\
    &= 1 - \Pr{Y_i = 0} \\
    &= 1 - (1-\spPA(\vx))^K \\
    &= \nu.
  \end{align*}
  Let $X \defeq \sum_{i=1}^n X_i$ with $\E*{X} = n \nu$.
  By Chernoff's bound, \begin{align*}
    \Pr{X \leq \frac{n \nu}{2}} \leq \exp\parentheses*{-\frac{n \nu}{8}}.
  \end{align*}
\end{proof}

With this result, the previously derived convergence guarantees follow also for the stochastic target space setting.
We leave a tighter analysis of stochastic target spaces to future work.\looseness=-1

\section{Computational Complexity}\label{sec:computational_complexity}

Evaluating the acquisition function of \itl in round $n$ requires computing for each $\vx \in \spS$, \begin{align*}
  &\Ism{\vfsub{\spA}}{y_{\vx}}[\spD_n] \\
  &= \frac{1}{2} \log\parentheses*{\frac{\det{\Var{\vfsub{\spA} \mid \spD_n}}}{\det{\Var{\vfsub{\spA} \mid y_{\vx}, \spD_n}}}} &&\text{(forward)} \\
  &= \frac{1}{2} \log\parentheses*{\frac{\Var{y_{\vx} \mid \spD_n}}{\Var{y_{\vx} \mid \vfsub{\spA}, \spD_n}}} &&\text{(backward)}.
\end{align*}
Let $|\spS| = m$ and $|\spA| = k$.
Then, the forward method has complexity $\BigO{m \cdot k^3}$.
For the backward method, observe that the variances are scalar and the covariance matrix $\Var{\vfsub{\spA} \mid \spD_n}$ only has to be inverted once for all points $\vx$. Thus, the backward method has complexity $\BigO{k^3 + m}$.\looseness=-1

When the size $m$ of $\spS$ is relatively small (and hence, all points in $\spS$ can be considered during each iteration of the algorithm), GP inference corresponds simply to computing conditional distributions of a multivariate Gaussian.
The performance can therefore be improved by keeping track of the full posterior distribution over $\vfsub{\spS}$ of size $\BigO{m^2}$ and conditioning on the latest observation during each iteration of the algorithm.
In this case, after each observation the posterior can be updated at a cost of $\BigO{m^2}$ which does not grow with the time $n$, unlike classical GP inference.\looseness=-1

Overall, when $m$ is small, the computational complexity of \itl is $\BigO{k^3 + m^2}$.
When $m$ is large (or possibly infinite) and a subset of $\tilde{m}$ points is considered in a given iteration, the computational complexity of \itl is $\BigO{k^3 + \tilde{m} \cdot n^3}$, neglecting the complexity of selecting the $\tilde{m}$ candidate points.
In the latter case, the computational cost of \itl is dominated by the cost of GP inference.\looseness=-1

\cite{khanna2017scalable} discuss distributed and stochastic approximations of greedy algorithms to (weakly) submodular problems that are also applicable to \itl.\looseness=-1

\section{Additional NN Experiments \& Details}\label{sec:nns_appendix}

We outline the few-shot training of NNs in \cref{alg:itl_nns}.\looseness=-1

\begin{algorithm}[htb]
  \caption{Few-shot training of NNs}
  \label{alg:itl_nns}
\begin{algorithmic}
  \STATE {\bfseries Given:} initialized or pre-trained model $f$, \emph{small} sample $A \sim \spPA$
  \STATE initialize dataset $\spD = \emptyset$
  \STATE \textbf{repeat}
  \STATE\hspace{1em} sample $S \sim \spPS$
  \STATE\hspace{1em} subsample target space $A' \uar A$
  \STATE\hspace{1em} initialize batch $B = \emptyset$
  \STATE\hspace{1em} compute kernel matrix $\mK$ over domain $[S, A']$
  \STATE\hspace{1em} \textbf{repeat $b$ times}
  \STATE\hspace{2em} compute acquisition function w.r.t. $A'$, based on $\mK$
  \STATE\hspace{2em} add maximizer $\vx \in S$ of acquisition function to $B$
  \STATE\hspace{2em} update conditional kernel matrix $\mK$
  \STATE\hspace{1em} obtain labels for $B$ and add to dataset $\spD$
  \STATE\hspace{1em} update $f$ using data $\spD$
\end{algorithmic}
\end{algorithm}

In \cref{sec:nns_appendix:details}, we detail metrics and hyperparameters.
We describe in \cref{sec:nns_appendx:embeddings,sec:nns_appendx:uncertainty_quantification} how to compute the (initial) conditional kernel matrix $\mK$, and in \cref{sec:nns_appendx:batch_selection} how to update this matrix $\mK$ to obtain conditional embeddings for batch selection.\looseness=-1

In \cref{sec:nns_appendix:undirected}, we show that \itl and \ctl significantly outperform a wide selection of commonly used heuristics.
In \cref{sec:nns_appendix:additional_experiments,sec:nns_appendix:ablation_sigma}, we conduct additional experiments and ablations.\looseness=-1

\cite{hubotter2024information} provides additional comparisons with respect to a variance-based trasnductive decision rule.\looseness=-1

\subsection{Experiment Details}\label{sec:nns_appendix:details}

\begin{table}[t]
    \caption{Hyperparameter summary of NN experiments. (*) we train until convergence on oracle validation accuracy.}
    \label{table:nn_hyperparams}
    \vskip 0.15in
    \begin{center}
    \begin{tabular}{@{}lll@{}}
      \toprule
      & MNIST & CIFAR-100 \\
      \midrule
      $\rho$ & $0.01$ & $1$ \\
      $M$ & $30$ & $100$ \\
      $m$ & $3$ & $10$ \\
      $k$ & $1\,000$ & $1\,000$ \\
      batch size $b$ & $1$ & $10$ \\
      \# of epochs & (*) & $5$ \\
      learning rate & $0.001$ & $0.001$ \\
      \bottomrule
    \end{tabular}
    \end{center}
    \vskip -0.1in
\end{table}

We evaluate the accuracy with respect to $\spPA$ using a Monte Carlo approximation with out-of-sample data: \begin{align*}
  \text{accuracy}(\vthetahat) \approx \E*[(\vx, y) \sim \spPA]{\Ind{y = \argmax_i f_i(\vx; \vthetahat)}}.
\end{align*}

We provide an overview of the hyperparameters used in our NN experiments in \cref{table:nn_hyperparams}.
The effect of noise standard deviation $\rho$ is small for all tested $\rho \in [1,100]$ (cf.~ablation study in \cref{table:nn_sigma_ablation}).\footnote{We use a larger noise standard deviation $\rho$ in CIFAR-100 to stabilize the numerics of batch selection via conditional embeddings (cf. \cref{table:nn_sigma_ablation}).}
$M$ denotes the size of the sample ${A \sim \spPA}$.
In each iteration, we select the target space $\spA \gets A'$ as a random subset of $m$ points from $A$.\footnote{This appears to improve the training, likely because it prevents overfitting to peculiarities in the finite sample~$A$~(cf. \cref{fig:nns_subsampled_target_frac}).}
We provide an ablation over $m$ in \cref{sec:nns_appendix:additional_experiments}.\looseness=-1

During each iteration, we select the batch $B$ according to the decision rule from a random sample from $\spPS$ of size $k$.\footnote{In large-scale problems, the work of \cite{coleman2022similarity} suggests to use an (approximate) nearest neighbor search to select the (large) candidate set rather than sampling u.a.r. from $\spPS$. This can be a viable alternative to simply increasing $k$ and suggests future work.}\looseness=-1

Since we train the MNIST model from scratch, we train from random initialization until convergence on oracle validation accuracy.\footnote{That is, to stop training as soon as accuracy on a validation set from $\spPA$ decreases in an epoch.}
We do this to stabilize the learning curves, and provide the least biased (due to the training algorithm) results.
For CIFAR-100, we train for $5$ epochs (starting from the previous iterations' model) which we found to be sufficient to obtain good performance.\looseness=-1

We use the ADAM optimizer \citep{kingma2014adam}.
In our CIFAR-100 experiments, we use a pre-trained EfficientNet-B0 \citep{tan2019efficientnet}, and fine-tune the final and penultimate layers.
We freeze earlier layers to prevent overfitting to the few-shot training data.\looseness=-1

To prevent numerical inaccuracies when computing the \itl objective, we optimize \begin{align}
  \Ism{\vysub{\spA}}{y_{\vx}}[\spD_{n-1}] = \frac{1}{2} \log\parentheses*{\frac{\Var{y_{\vx}}[\spD_{n-1}]}{\Var{y_{\vx}}[\vysub{\spA}, \spD_{n-1}]}}
\end{align} instead of \cref{eq:objective_gp_setting}, which amounts to adding $\rho^2$ to the diagonal of the covariance matrix before inversion.
This appears to improve numerical stability, especially when using gradient embeddings.\footnote{In our experiments, we observe that the effect of various choices of $\rho$ on this slight adaptation of the \itl decision rule has negligible impact on performance. The more prominent effect of $\rho$ appears to arise from the batch selection via conditional embeddings (cf. \cref{table:nn_sigma_ablation}).}\looseness=-1

\subsection{Embeddings and Kernels}\label{sec:nns_appendx:embeddings}

\begin{figure*}[]
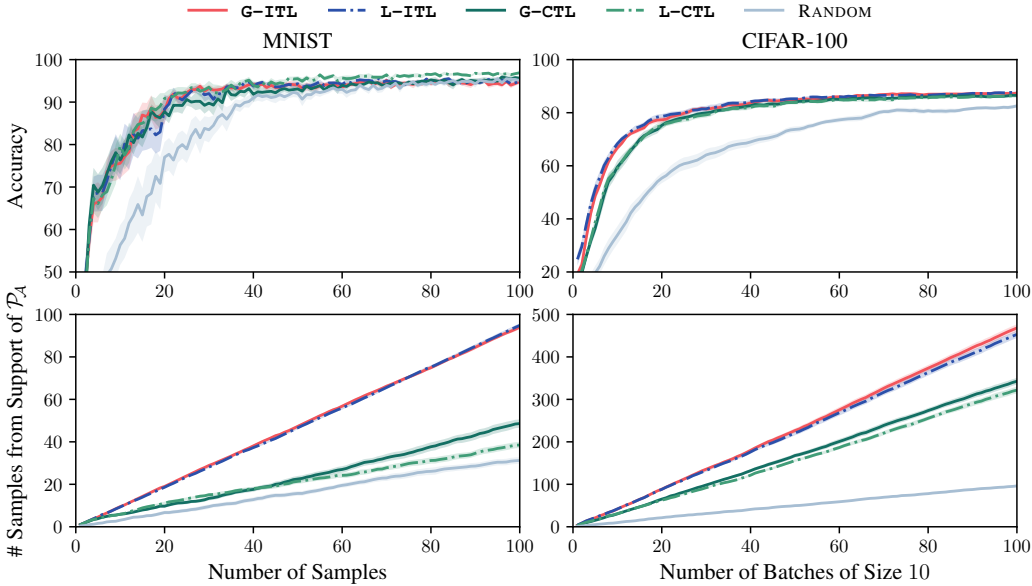

  \incplt[\textwidth]{nns_embeddings}
  \vspace{-0.5cm}
  \caption{Comparison of loss gradient (``G-'') and last-layer embeddings (``L-'').}
  \label{fig:nns_embeddings}
\end{figure*}

Using a neural network to parameterize~$f$, we evaluate the canonical approximations of~$f$ by a stochastic process in the following.\looseness=-1

An embedding~$\vphi(\vx)$ is a latent representation of an input~$\vx$.
Collecting the embeddings as rows in the design matrix~$\mPhi$ of a set of inputs~$X$, one can approximate the network by the linear function~${\vfsub{X} = \mPhi \vbeta}$ with weights~$\vbeta$.
Approximating the weights by ${\vbeta \sim \N{\vmu}{\mSigma}}$ implies that ${\vfsub{X} \sim \N{\mPhi\vmu}{\mPhi \mSigma \transpose{\mPhi}}}$.
The covariance matrix~${\mKsub{XX} = \mPhi \mSigma \transpose{\mPhi}}$ can be succinctly represented in terms of its associated kernel~${k(\vx, \vxp) = \transpose{\vphi(\vx)} \mSigma \vphi(\vxp)}$.
Here, \begin{itemize}[noitemsep]
  \item $\vphi(\vx)$ is the latent representation of $\vx$, and
  \item $\mSigma$ captures the dependencies in the latent space.
\end{itemize}

While any choice of embedding~$\vphi$ is possible, the following are common choices: \begin{enumerate}
  \item \emph{Last-Layer}: A common choice for $\vphi(\vx)$ is the representation of $\vx$ from the penultimate layer of the neural network \citep{holzmuller2023framework}.
  Interpreting the early layers as a feature encoder, this uses the low-dimensional feature map akin to random feature methods \citep{rahimi2007random}.\looseness=-1

  \item \emph{Output Gradients (empirical NTK)}: Another common choice is $\vphi(\vx) = \grad_\vtheta \vf(\vx; \vtheta)$ where $\vtheta$ are the network parameters \citep{holzmuller2023framework}.
  Its associated kernel is known as the (empirical) \emph{neural tangent kernel} (NTK) and the posterior mean of this GP approximates wide NNs trained with gradient descent \citep{arora2019exact,khan2019approximate,he2020bayesian}.
  \cite{kassraie2022neural} derive bounds of $\gamma_n$ under this kernel.
  If $\vtheta$ is restricted to the weights of the final linear layer, then this embedding is simply the last-layer embedding.\looseness=-1

  \item \emph{Loss Gradients}: Another possible choice is \begin{align*}
    \vphi(\vx) = \left. \grad_{\vtheta} \ell(\vf(\vx; \vtheta), \hat{y}(\vx)) \right\rvert_{\vtheta = \vthetahat}
  \end{align*} where $\ell$ is a loss function, $\hat{y}(\vx)$ is the predicted label, and $\vthetahat$ are the current parameter estimates \cite{ash2020deep}.\looseness=-1

  \item \emph{Outputs (empirical NNGP)}: Another possible choice is $\vphi(\vx) = \vf(\vx)$, i.e., the output of the network.
  Its associated kernel is known as the (empirical) \emph{neural network Gaussian process} (NNGP) kernel \citep{lee2017deep}.\looseness=-1
\end{enumerate}

In the additional experiments from this appendix we use last-layer embeddings unless noted otherwise.
We compare the performance of last-layer and the loss gradient embedding \begin{align}
  \vphi(\vx) = \left. \grad_{\vthetap} \ell_{\mathrm{CE}}(\vf(\vx; \vtheta), \hat{y}(\vx)) \right\rvert_{\vtheta = \vthetahat} \label{eq:loss_gradient_embedding}
\end{align} where $\vthetap$ are the parameters of the final output layer, $\smash{\vthetahat}$ are the current parameter estimates, $\smash{\hat{y}(\vx) = \argmax_i f_i(\vx; \vthetahat)}$ are the associated predicted labels, and $\ell_{\mathrm{CE}}$ denotes the cross-entropy loss.
This gradient embedding captures the potential update direction upon observing a new point \citep{ash2020deep}.
Moreover, \cite{ash2020deep} show that for most neural networks, the norm of these gradient embeddings are a conservative lower bound to the norm assumed by taking any other proxy label $\hat{y}(\vx)$.
In \cref{fig:nns_embeddings}, we observe only negligible differences in performance between this and the last-layer embedding.\looseness=-1

\subsection{Towards Uncertainty Quantification in Latent Space}\label{sec:nns_appendx:uncertainty_quantification}

\begin{figure*}[]
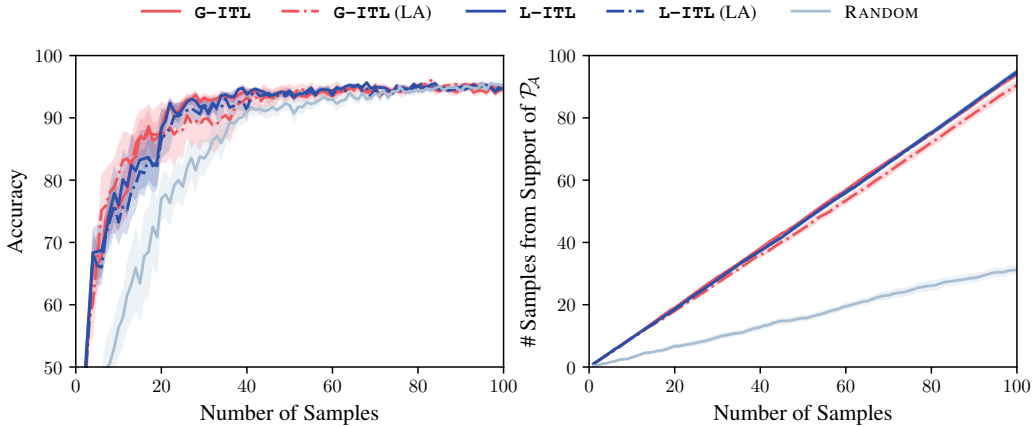

  \incplt[\textwidth]{nns_uncertainty_quantification}
  \vspace{-0.5cm}
  \caption{Uncertainty quantification (i.e., estimation of $\mSigma$) via a Laplace approximation (LA, \cite{daxberger2021laplace}) over last-layer weights using a Kronecker factored log-likelihood Hessian approximation \citep{martens2015optimizing} and the loss gradient embeddings from \cref{eq:loss_gradient_embedding}. The results are shown for the MNIST experiment. We do not observe a performance improvement beyond the trivial approximation $\mSigma = \mI$.}
  \label{fig:nns_uncertainty_quantification}
\end{figure*}

A straightforward and common approximation of the uncertainty about NN weights is given by $\mSigma = \mI$, and we use this approximation throughout our experiments.\looseness=-1

The poor performance of \textsc{UnSa} (cf. \cref{sec:nns_appendix:undirected}) with this approximation suggests that with more sophisticated approximations, the performance of \itl and \ctl can be further improved.
Further research is needed to study the effect of more sophisticated approximations of ``uncertainty'' in the latent space.
For example, with parameter gradient embeddings, the latent space is the network parameter space where various approximations of $\mSigma$ based on Laplace approximation \citep{daxberger2021laplace,antoran2022adapting}, variational inference \citep{blundell2015weight}, or Markov chain Monte Carlo \citep{maddox2019simple} have been studied.
We also evaluate Laplace approximation (LA, \cite{daxberger2021laplace})
for estimating $\mSigma$ but see no improvement (cf. \cref{fig:nns_uncertainty_quantification}).
Nevertheless, we believe that uncertainty quantification is a promising direction for future work, with the potential to improve performance of \itl and its variations substantially.\looseness=-1

\subsection{Batch Selection via Conditional Embeddings}\label{sec:nns_appendx:batch_selection}

We will refer to \cref{eq:greedy_batch_selection} as \textsc{BaCE}, short for \emph{\underline{ba}tch selection via \underline{c}onditional \underline{e}mbeddings}.
\textsc{BaCE} can be implemented efficiently using the Gaussian approximation of $\vfsub{X}$ from \cref{sec:nns_appendx:embeddings} by iteratively conditioning on the previously selected points $\vx_{n,1:i-1}$, and updating the kernel matrix $\mKsub{XX}$ using the closed-form formula for the variance of conditional Gaussians: \begin{align}
  \mKsub{XX} \gets \mKsub{XX} - \frac{1}{\mKsub{\vx_j \vx_j} + \rho^2} \mKsub{X \vx_j} \mKsub{\vx_j X}
\end{align} where $j$ denotes the index of the selected $\vx_{n,i}$ within $X$ and $\rho^2$ is the noise variance.
Note that $\mKsub{\vx_j \vx_j}$ is a scalar and $\mKsub{X \vx_j}$ is a row vector, and hence, this iterative update can be implemented efficiently.\looseness=-1

We remark that \cref{eq:batch_selection,eq:greedy_batch_selection} are natural extensions of previous non-adaptive active learning methods, which typically maximize some notion of ``distance'' between points in the batch, to the ``directed'' setting \citep{ash2020deep,zanette2021design,holzmuller2023framework,pacchiano2024experiment}.
\textsc{BaCE} simultaneously maximizes ``distance'' between points in a batch and minimizes ``distance'' to points in~$\spA$.\looseness=-1

\paragraph{Computational complexity of \textsc{BaCE}}

As derived in \cref{sec:computational_complexity}, a single batch selection step of \textsc{BaCE} has complexity $\BigO{b (k^3 + m^2)}$ where $b$ is the size of the batch, $k = \abs{\spA}$ is the size of the target space, and $m = \abs{\spS}$ is the size of the candidate set.
In the case of large $m$, an alternative implementation whose runtime does not depend on $m$ is described in \cref{sec:computational_complexity}.\looseness=-1

\subsection{Baselines}\label{sec:nns_appendix:undirected}

\begin{figure*}[]
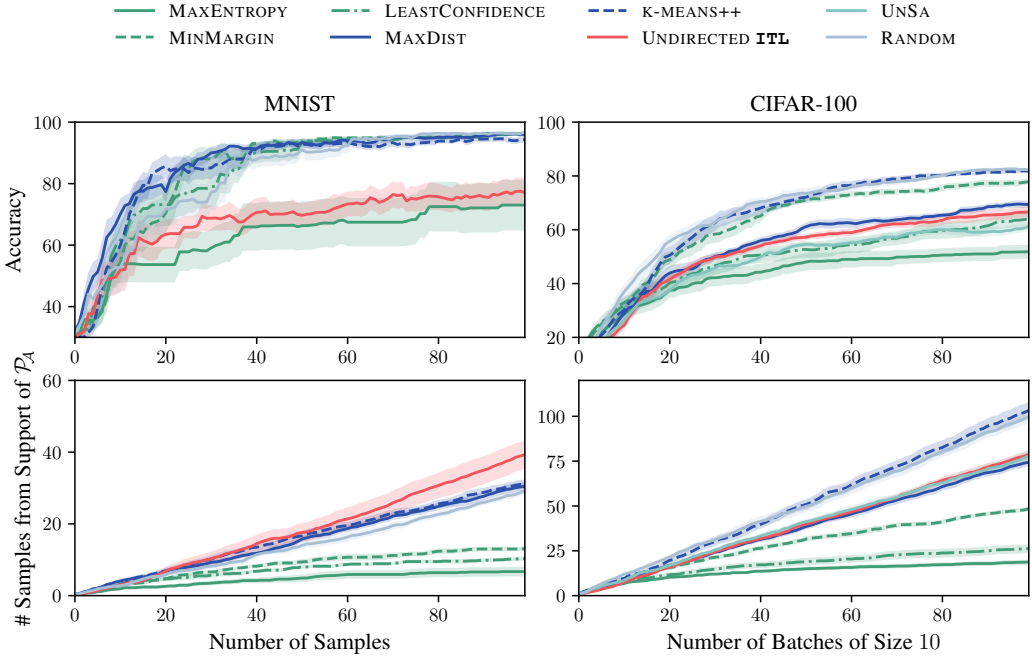

  \incplt[\textwidth]{nns_undirected}
  \vspace{-0.5cm}
  \caption{Comparison of ``undirected'' baselines for the experiment of \cref{fig:nns}. In the MNIST experiment, \textsc{UnSa} and \textsc{Undirected \itl} coincide, and we therefore only plot the latter.}
  \label{fig:nns_undirected}
\end{figure*}

In the following, we briefly describe the most commonly used ``undirected'' decision rules.\looseness=-1

Denote the softmax distribution over labels $i$ at inputs $\vx$ by \begin{align*}
  p_i(\vx; \vthetahat) \propto \exp(f_i(\vx; \vthetahat)).
\end{align*}
The following heuristics computed based on the softmax distribution aim to quantify the ``uncertainty'' about a particular input $\vx$: \begin{itemize}
  \item \textsc{MaxEntropy} \citep{settles2008analysis}: \begin{align*}
    \vx_{n} = \argmax_{\vx \in \spS} \Hsm{p(\vx; \vthetahat_{n-1})}.
  \end{align*}

  \item \textsc{MaxMargin} \citep{scheffer2001active,settles2008analysis}: \begin{align*}
    \vx_{n} = \argmin_{\vx \in \spS} p_1(\vx; \vthetahat_{n-1}) - p_2(\vx; \vthetahat_{n-1})
  \end{align*} where $p_1$ and $p_2$ are the two largest class probabilities.

  \item \textsc{LeastConfidence} \citep{lewis1994sequential,settles2008analysis,hendrycks2017baseline,tamkin2022active}: \begin{align*}
    \vx_{n} = \argmin_{\vx \in \spS} p_1(\vx; \vthetahat_{n-1})
  \end{align*} where $p_1$ is the largest class probability.
\end{itemize}

An alternative class of decision rules aims to select diverse batches by maximizing the distances between points.
Embeddings $\vphi(\vx)$ induce the (Euclidean) embedding distance \begin{align*}
  d_{\vphi}(\vx, \vxp) \defeq \norm{\vphi(\vx) - \vphi(\vxp)}_2.
\end{align*}
Similarly, a kernel $k$ induces the kernel distance \begin{align*}
  d_k(\vx, \vxp) \defeq \sqrt{k(\vx, \vx) + k(\vxp, \vxp) - 2 k(\vx, \vxp)}.
\end{align*}
It is straightforward to see that if $k(\vx, \vxp) = \transpose{\vphi(\vx)} \vphi(\vxp)$, then embedding and kernel distances coincide, i.e., $d_{\vphi}(\vx, \vxp) = d_k(\vx, \vxp)$.\looseness=-1

\begin{itemize}
  \item \textsc{MaxDist} \citep{holzmuller2023framework,yu2010passive,sener2017active,geifman2017deep} constructs the batch by choosing the point with the maximum distance to the nearest previously selected point: \begin{align*}
    \vx_n = \argmax_{\vx \in \spS} \min_{i < n} d(\vx, \vx_i)
  \end{align*}

  \item Similarly, \textsc{k-means++} \citep{holzmuller2023framework} selects the batch via \textsc{k-means++} seeding \citep{arthur2007k,ostrovsky2013effectiveness}.
  That is, the first centroid $\vx_1$ is chosen uniformly at random and the subsequent centroids are chosen with a probability proportional to the square of the distance to the nearest previously selected centroid: \begin{align*}
    \Pr{\vx_n = \vx} \propto \min_{i < n} d(\vx, \vx_i)^2.
  \end{align*}
  When using the loss gradient embeddings from \cref{eq:loss_gradient_embedding}, this decision rule is known as \textsc{BADGE} \citep{ash2020deep}.
\end{itemize}

Finally, we summarize common kernel-based decision rules.
\begin{itemize}
  \item \textsc{Undirected \itl} chooses \begin{align*}
    \vx_n &= \argmax_{\vx \in \spS} \I{\vfsub{\spS}}{y_{\vx}}[\spD_{n-1}] \\
    &= \argmax_{\vx \in \spS} \I{f_{\vx}}{y_{\vx}}[\spD_{n-1}].
  \end{align*}
  This can be shown to be equivalent to \textsc{MaxDet} \citep{holzmuller2023framework} which selects \begin{align*}
    \vx_n = \argmax_{\vx \in \spS} \det{\mKsub{\vx} + \sigma^2 \mI}
  \end{align*} where $\mKsub{\vx}$ denotes the kernel matrix over $\vx_{1:n-1} \cup \{\vx\}$, conditioned on observations~$\spD_{n-1}$.

  \item Uncertainty sampling \citep[\textsc{UnSa}, ][]{lewis1994heterogeneous} which with embeddings $\vphi_{n-1}$ after round $n-1$ selects: \begin{align*}
    \vx_{n} = \argmax_{\vx \in \spS} \sigma_{n-1}^2(\vx) = \argmax_{\vx \in \spS} \norm{\vphi_{n-1}(\vx)}_2^2.
  \end{align*}
  With batch size $b = 1$, \textsc{UnSa} coincides with \textsc{Undirected \itl}.
  When evaluated with gradient embeddings, this acquisition function is similar to previously used ``embedding length'' or ``gradient length'' heuristics \citep{settles2008analysis}.
\end{itemize}

We compare to the abovementioned decision rules and summarize the results in \cref{fig:nns_undirected}.
We observe that most ``undirected'' decision rules perform worse (and often significantly so) than \textsc{Random}.
This is likely due to frequently selecting points from the support of $\spPS$ which are not in the support of $\spPA$ since the points are ``adversarial examples'' that the model $\vthetahat$ is not trained to perform well on.
In the case of MNIST, the poor performance can also partially be attributed to the well-known ``cold-start problem'' \citep{gao2020consistency}.\looseness=-1

In \cref{fig:nns}, we also compare to the following ``directed'' decision rules: \begin{itemize}
  \item \textsc{CosineSimilarity} \citep{settles2008analysis} selects $\vx_{n} = \argmax_{\vx \in \spS} \angle_{\vphi_{n-1}}(\vx,\spA)$ where \begin{align*}
    \angle_{\vphi}(\vx,\spA) \defeq \frac{1}{\abs{\spA}} \sum_{\vxp \in \spA} \frac{\transpose{\vphi(\vx)} \vphi(\vxp)}{\norm{\vphi(\vx)}_2 \norm{\vphi(\vxp)}_2}.
  \end{align*}

  \item \textsc{InformationDensity} \citep{settles2008analysis} is defined as the multiplicative combination of \textsc{MaxEntropy} and \textsc{CosineSimilarity}: \begin{align*}
    \vx_{n} = \argmax_{\vx \in \spS} \Hsm{p(\vx; \vthetahat_{n-1})} \cdot \parentheses*{\angle_{\vphi_{n-1}}(\vx, \spA)}^\beta
  \end{align*} where $\beta > 0$ controls the relative importance of both terms.
  We set $\beta = 1$ in our experiments.
\end{itemize}

\subsection{Additional experiments}\label{sec:nns_appendix:additional_experiments}

\begin{figure*}[]
  \incplt[\textwidth]{nns_imbalanced_train}
  \vspace{-0.5cm}
  \caption{Imbalanced $\spPS$ experiment.}
  \label{fig:nns_imbalanced_train}
\end{figure*}

\begin{figure*}[]
  \incplt[\textwidth]{nns_imbalanced_test}
  \vspace{-0.5cm}
  \caption{Imbalanced $A \sim \spPA$ experiment.}
  \label{fig:nns_imbalanced_test}
\end{figure*}

We conduct the following additional experiments: \begin{enumerate}
  \item \emph{Imbalanced $\spPS$} (\cref{fig:nns_imbalanced_train}): We artificially remove $80\%$ of the support of $\spPA$ from $\spPS$.
  For example, in case of MNIST, we remove $80\%$ of the images with labels $3$, $6$, and $9$ from $\spPS$.
  This makes the learning task more difficult, as $\spPA$ is less represented in $\spPS$, meaning that the ``targets'' are more sparse.
  The trend of \itl outperforming \ctl which outperforms \textsc{Random} is even more pronounced in this setting.

  \item \emph{Imbalanced $A \sim \spPA$} (\cref{fig:nns_imbalanced_test}): We artificially remove $50\%$ of part of the support of $\spPA$ while generating $A \sim \spPA$ to evaluate the robustness of \itl and \ctl in presence of an imbalanced target space $\spA$.
  Concretely, in case of MNIST, we remove $50\%$ of the images with labels $3$ and $6$ from $A$.
  In case of CIFAR-100, we remove $50\%$ of the images with labels $\{0, \dots, 4\}$ from $A$.
  We still observe the same trends as in the other experiments.

  \item \emph{Choice of $k$} (\cref{fig:nns_vtl}): We evaluate the effect of the number of points $k$ at which the decision rule is evaluated.
  Not surprisingly, we observe that the performance of \itl and \ctl improves with larger $k$.

  \item \emph{Choice of $m$} (\cref{fig:nns_subsampled_target_frac}): Next, we evaluate the choice of $m$, i.e., the size of the target space $\spA$ relative to the number $M$ of candidate points $A \sim \spPA$.
  We write $p = m / M$.
  We generally observe that a larger $p$ leads to better performance (with $p=1$ being the best choice).
  However, it appears that a smaller $p$ can be beneficial with respect to accuracy when a large number of batches are selected.
  We believe that this may be because a smaller $p$ improves the diversity between selected batches.

  \item \emph{Choice of $M$} (\cref{fig:nns_n_init}): Finally, we evaluate the choice of $M$, i.e., the size of $A \sim \spPA$.
  Not surprisingly, we observe that the performance of \itl improves with larger $M$.
\end{enumerate}

\begin{figure*}[]
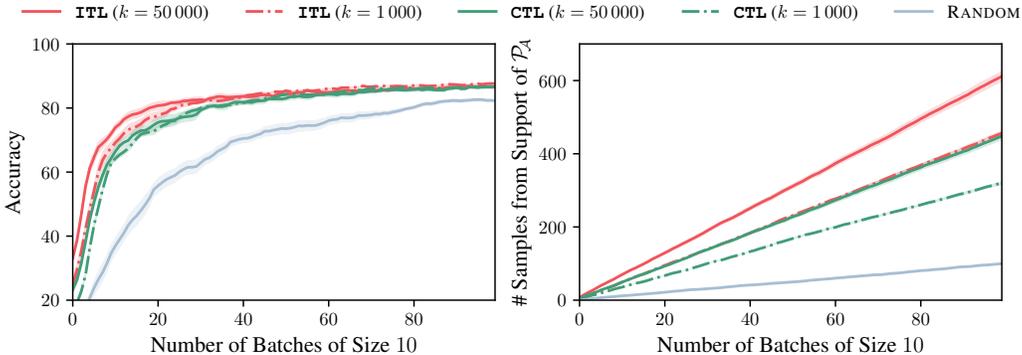

  \incplt[\textwidth]{nns_vtl}
  \vspace{-0.5cm}
  \caption{Choice of $k$ in the CIFAR-100 experiment.}
  \label{fig:nns_vtl}
\end{figure*}

\begin{figure*}[]
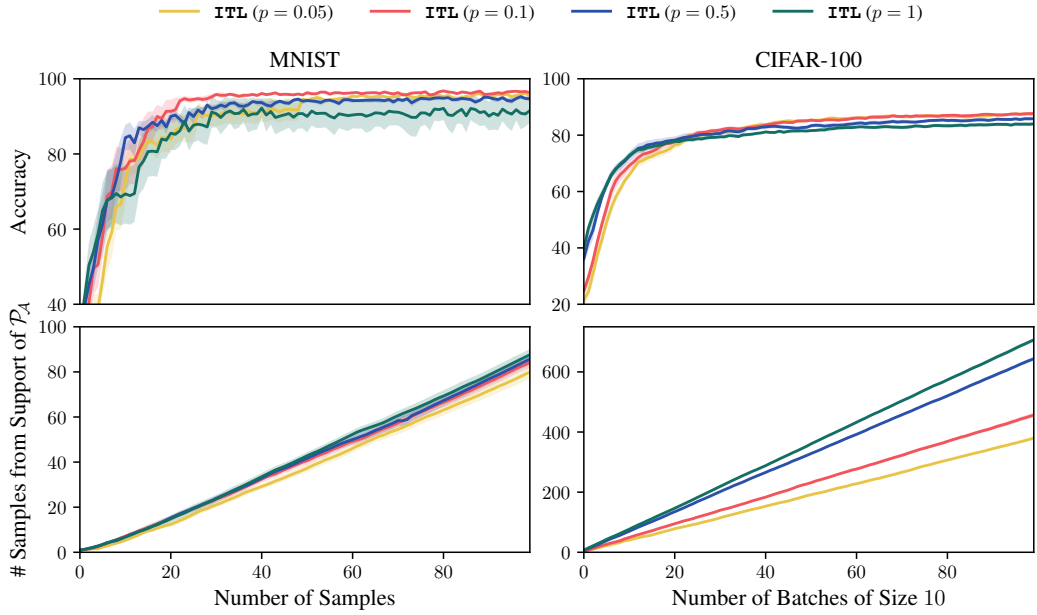

  \incplt[\textwidth]{nns_subsampled_target_frac}
  \vspace{-0.5cm}
  \caption{Evaluation of the choice of $m$ relative to the size $M$ of $A \sim \spPA$. Here, $p = m / M$.}
  \label{fig:nns_subsampled_target_frac}
\end{figure*}

\begin{figure*}[]
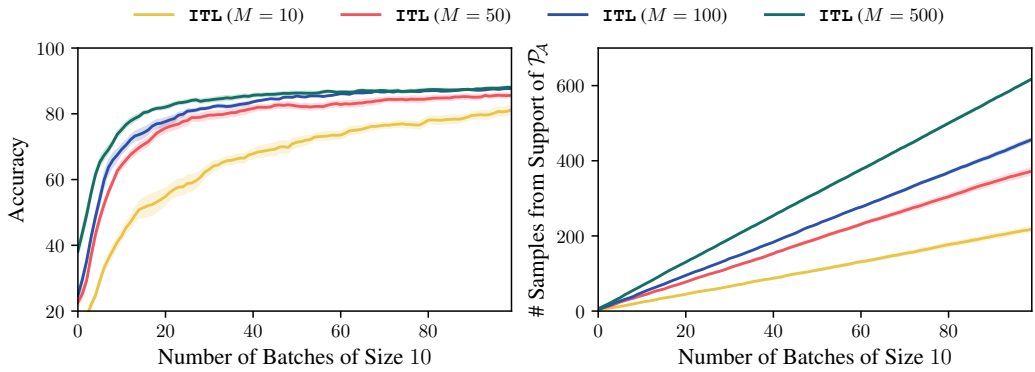

  \incplt[\textwidth]{nns_n_init}
  \vspace{-0.5cm}
  \caption{Evaluation of the choice of $M$, i.e., the size of $A \sim \spPA$, in the CIFAR-100 experiment.}
  \label{fig:nns_n_init}
\end{figure*}

\subsection{Ablation study of noise standard deviation $\rho$}\label{sec:nns_appendix:ablation_sigma}

In \cref{table:nn_sigma_ablation}, we evaluate the CIFAR-100 experiment with different noise standard deviations $\rho$.
We observe that the performance of batch selection via conditional embeddings drops (mostly for the less numerically stable gradient embeddings) if $\rho$ is too small, since this leads to numerical inaccuracies when computing the conditional embeddings.
Apart from this, the effect of $\rho$ is negligible.\looseness=-1

\begin{table*}[]
  \caption{Ablation study of noise standard deviation $\rho$ in the CIFAR-100 experiment. We list the accuracy after $100$ rounds per decision rule, with its standard error over $10$ random seeds. ``(top-$b$)'' denotes variants where batches are selected by taking the top-$b$ points according to the decision rule rather than using batch selection via conditional embeddings. Shown in \textbf{bold} are the best performing decision rules, and shown in \textit{italics} are results due to numerical instability.}
  \label{table:nn_sigma_ablation}
  \vskip 0.15in
  \begin{center}
  \begin{tabular}{@{}lllll@{}}
    \toprule
    $\rho$ & $0.0001$ & $0.01$ & $1$ & $100$ \\
    \midrule
    \gitl & $\mathit{78.26\pm1.40}$ & $\mathit{79.12\pm1.19}$ & $\mathbf{87.16\pm0.29}$ & $\mathbf{87.18\pm0.28}$ \\
    \litl & $\mathbf{87.52\pm0.48}$ & $\mathbf{87.52\pm0.41}$ & $\mathbf{87.53\pm0.35}$ & $86.47\pm0.22$ \\
    \gctl & $\mathit{58.68\pm2.11}$ & $\mathit{81.44\pm1.04}$ & $86.52\pm0.44$ & $\mathbf{86.92\pm0.56}$ \\
    \lctl & $\mathbf{86.40\pm0.71}$ & $\mathbf{86.38\pm0.75}$ & $86.00\pm0.69$ & $84.78\pm0.39$ \\
    \gitl (top-$b$) & $85.84\pm0.54$ & $85.92\pm0.52$ & $85.84\pm0.54$ & $85.55\pm0.46$ \\
    \litl (top-$b$) & $85.44\pm0.58$ & $85.46\pm0.54$ & $85.44\pm0.59$ & $85.29\pm0.36$ \\
    \gctl (top-$b$) & $82.27\pm0.67$ & $82.27\pm0.67$ & $82.27\pm0.67$ & $82.27\pm0.67$ \\
    \lctl (top-$b$) & $80.73\pm0.68$ & $80.73\pm0.68$ & $80.73\pm0.68$ & $80.73\pm0.68$ \\
    \textsc{BADGE} & $83.24\pm0.60$ & $83.24\pm0.60$ & $83.24\pm0.60$ & $83.24\pm0.60$ \\
    \textsc{InformationDensity} & $79.24\pm0.51$ & $79.24\pm0.51$ & $79.24\pm0.51$ & $79.24\pm0.51$ \\
    \textsc{Random} & $82.49\pm0.66$ & $82.49\pm0.66$ & $82.49\pm0.66$ & $82.49\pm0.66$ \\
    \bottomrule
  \end{tabular}
  \end{center}
  \vskip -0.1in
\end{table*}

\begin{table*}[t]
    \caption{Magnitudes of $\gamma_n$ for common kernels. The magnitudes hold under the assumption that $\spX$ is compact. Here, $B_{\nu}$ is the modified Bessel function. We take the magnitudes from Theorem 5 of \cite{srinivas2009gaussian} and Remark 2 of \cite{vakili2021information}. The notation $\BigOTil{\cdot}$ subsumes log-factors. For $\nu = 1/2$, the Matérn kernel is equivalent to the Laplace kernel. For $\nu \to \infty$, the Matérn kernel is equivalent to the Gaussian kernel. The functions sampled from a Matérn kernel are $\lceil\nu\rceil-1$ mean square differentiable.}
    \label{table:gamma_rates}
    \vskip 0.15in
    \begin{center}
    \begin{tabular}{@{}lll@{}}
      \toprule
      Kernel & $k(\vx,\vxp)$ & $\gamma_n$ \\
      \midrule
      Linear & $\transpose{\vx} \vxp$ & $\BigO{d \log(n)}$ \\
      Gaussian & $\exp\parentheses*{-\frac{\norm{\vx - \vxp}_2^2}{2 h^2}}$ & $\BigOTil{\log^{d+1}(n)}$ \\
      Laplace & $\exp\parentheses*{-\frac{\norm{\vx - \vxp}_1}{h}}$ & $\BigOTil{n^{\frac{d}{1 + d}}\log^{\frac{1}{1+d}}(n)}$ \\
      Matérn & $\frac{2^{1-\nu}}{\Gamma(\nu)}\parentheses*{\frac{\sqrt{2\nu}\norm{\vx-\vxp}_2}{h}}^{\nu} B_{\nu} \parentheses*{\frac{\sqrt{2\nu}\norm{\vx-\vxp}_2}{h}}$ & $\BigOTil{n^{\frac{d}{2\nu + d}}\log^{\frac{2\nu}{2\nu+d}}(n)}$ \\
      \bottomrule
    \end{tabular}
    \end{center}
    \vskip -0.1in
\end{table*}

\end{document}